\documentclass{article}

\PassOptionsToPackage{numbers}{natbib}

 \usepackage[preprint]{neurips_2026}

\usepackage[utf8]{inputenc} %
\usepackage[T1]{fontenc}    %
\usepackage{hyperref}       %
\usepackage{url}            %
\usepackage{booktabs}       %
\usepackage{amsfonts}       %
\usepackage{nicefrac}       %
\usepackage{microtype}      %
\usepackage{xcolor}         %

\usepackage{microtype}
\usepackage{graphicx}
\usepackage{subcaption}
\usepackage{booktabs}
\usepackage{amsmath}
\usepackage{amssymb}
\usepackage{mathtools}
\usepackage{amsthm}
\usepackage{listings}
\usepackage{enumitem}
\usepackage{nicematrix}
\usepackage{wrapfig}
\usepackage{graphicx}
\usepackage{amsthm}
\newtheorem{theorem}{Theorem}

\usepackage{etoolbox}
\newtoggle{comment}
\toggletrue{comment}

\usepackage[capitalize,noabbrev]{cleveref}

\usepackage{amsmath,amsfonts,bm}

\def\eqref#1{equation~\ref{#1}}

\def\1{\bm{1}}

\def\vf{{\bm{f}}}
\def\vg{{\bm{g}}}
\def\vh{{\bm{h}}}

\def\vl{{\bm{l}}}

\def\vr{{\bm{r}}}
\def\vs{{\bm{s}}}

\def\vx{{\bm{x}}}
\def\vy{{\bm{y}}}
\def\vz{{\bm{z}}}

\def\mA{{\bm{A}}}
\def\mB{{\bm{B}}}
\def\mC{{\bm{C}}}
\def\mD{{\bm{D}}}

\def\mG{{\bm{G}}}

\def\mJ{{\bm{J}}}

\def\mO{{\bm{O}}}

\def\mU{{\bm{U}}}

\def\mW{{\bm{W}}}

\def\mZ{{\bm{Z}}}

\DeclareMathAlphabet{\mathsfit}{\encodingdefault}{\sfdefault}{m}{sl}
\SetMathAlphabet{\mathsfit}{bold}{\encodingdefault}{\sfdefault}{bx}{n}

\renewcommand{\cite}{\citep}

\definecolor{codegreen}{rgb}{0,0.6,0}
\definecolor{codegray}{rgb}{0.5,0.5,0.5}
\definecolor{codepurple}{rgb}{0.07,0,0.53}
\definecolor{codered}{RGB}{189,41,0}
\definecolor{codecomment}{RGB}{153,153,153}
\definecolor{backcolour}{rgb}{0.96,0.96,0.96}
\definecolor{mygreen}{rgb}{0.0, 0.5, 0.0}
\definecolor{royalblue}{rgb}{0.0, 0.14, 0.4}
\definecolor{egyptianblue}{rgb}{0.06, 0.2, 0.65}
\definecolor{royalazure}{rgb}{0.0, 0.22, 0.66}
\definecolor{portlandorange}{rgb}{1.0, 0.35, 0.21}
\definecolor{saddlebrown}{RGB}{139,69,19}
\definecolor{sienna}{RGB}{183,105,68}
\definecolor{saddlebrown}{RGB}{139,69,19}

\hypersetup{
    colorlinks=true,
    linkcolor=sienna,
    citecolor=egyptianblue,
}

\lstdefinestyle{mystyle}{
    backgroundcolor=\color{backcolour},   
    commentstyle=\color{codegreen},
    keywordstyle=\color{codered},
    numberstyle=\tiny\color{codegray},
    stringstyle=\color{codepurple},
    emph={fp16, unroll_steps, problems, dependencies, gradient_accumulation},          
    emphstyle=\color{codered},    
    basicstyle=\ttfamily\footnotesize,
    breakatwhitespace=false,         
    breaklines=true,                 
    captionpos=b,                    
    keepspaces=true,                 
    numbers=left,                    
    numbersep=5pt,                  
    showspaces=false,                
    showstringspaces=false,
    showtabs=false,   
    morekeywords={>,<,.,;,-,!,=,~},
    tabsize=2
}
\title{CODA: Rewriting Transformer Blocks as GEMM-Epilogue Programs}

\author{
\textbf{Han Guo}$^{1}$\thanks{Part of this work was done at Together AI.} \quad
\textbf{Jack Zhang}$^{2}$ \quad
\textbf{Arjun Menon}$^{2}$ \vspace{2mm} \\
\textbf{Driss Guessous}$^{4}$ \quad
\textbf{Vijay Thakkar}$^{4}$ \quad
\textbf{Yoon Kim}$^{1}$ \quad
\textbf{Tri Dao}$^{2,3}$ \vspace{2mm} \\
$^{1}$Massachusetts Institute of Technology \quad
$^{2}$Princeton University \quad
$^{3}$Together AI \quad
$^{4}$Meta  
\vspace{2mm} \\ 
\texttt{hanguo@mit.edu}
}

\begin{document}

\maketitle

\begin{abstract}
Transformer training systems are built around dense linear algebra, yet a nontrivial fraction of end-to-end time is spent on surrounding memory-bound operators. Normalization, activations, residual updates, reductions, and related computations repeatedly move large intermediate tensors through global memory while performing little arithmetic, making data movement an increasingly important bottleneck in otherwise highly optimized training stacks. We introduce \texttt{CODA}, a GPU kernel abstraction that expresses these computations as GEMM-plus-epilogue programs. \texttt{CODA} is based on the observation that many Transformer operators exposed as separate framework kernels can be algebraically reparameterized to execute while a GEMM output tile remains on chip, before it is written to memory. The abstraction fixes the GEMM mainloop and exposes a small set of composable epilogue primitives for scaling, reductions, pairwise transformations, and accumulation. This constrained interface preserves the performance structure of expert-written GEMMs while remaining expressive enough to cover nearly all non-attention computation in the forward and backward pass of a standard Transformer block. Across representative Transformer workloads, both human- and LLM-authored \texttt{CODA} kernels achieve high performance, suggesting that GEMM-plus-epilogue programming offers a practical path toward combining framework-level productivity with hardware-level efficiency.\footnote{Code available at \url{https://github.com/HanGuo97/coda-kernels}.}
\end{abstract}

\section{Introduction}

\begin{wrapfigure}{r}{0.39\textwidth}
    \centering
    \vspace{-27pt}
    \includegraphics[width=0.99\linewidth]{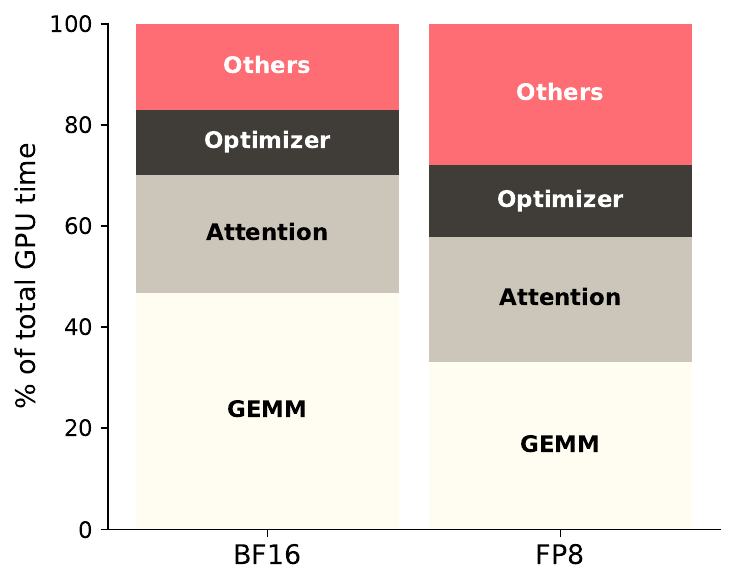}
    \vspace{-17pt}
    \caption{Runtime breakdown for LLaMA-3-style 1B model training on a single H100 using TorchTitan.}
    \vspace{-13pt}
    \label{fig:runtime}
\end{wrapfigure}
LLM training has become just as much of a systems problem as a modeling one.
FLOPs in modern Transformer-based LLMs are dominated by matrix multiplications and attention, whose kernels have been heavily optimized for Tensor Core execution. Yet Transformers, and deep learning architectures more broadly, also contain normalization, activations, residual updates, reductions, and other bandwidth-limited operations that move large tensors through memory while doing little arithmetic. Prior work has shown that data movement is a central bottleneck in Transformer training~\cite{ivanov2021data}; as \Cref{fig:runtime} shows, when training a LLaMA-3-style 1B model on a single H100 using TorchTitan~\cite{liang2024torchtitan}, these non-GEMM operations account for a nontrivial fraction of end-to-end runtime. As hardware increasingly accelerates low-precision matrix multiplication through formats such as FP8 and FP4, this bottleneck is becoming more important, as the cost of materializing intermediate tensors does not improve at the same rate.

Existing programming models make this issue difficult to address. High-level frameworks such as PyTorch express Transformer blocks as operator sequences, with autograd making the backward pass similarly convenient. This is productive, but operator boundaries often become materialization boundaries and obscure fusion opportunities across forward and backward computation. Production-level LLM systems therefore often bypass framework abstractions with hand-written backward passes or custom kernels, as in large-scale LLaMA training~\cite{grattafiori2024llama} and inference systems~\cite{kwon2023efficient,zheng2024sglang}. This work asks whether there is a middle ground. That is, is it possible to  recover much of the performance of custom kernels without giving up the structure needed for programmability and automation?

Our starting observation is that many Transformer computations that appear at the framework level as separate operators can be algebraically reparameterized as GEMM-plus-epilogue programs (\cref{fig:fwd-pass}). In this form, a highly optimized GEMM mainloop produces output tiles, while a programmable epilogue performs tile-local transformations before the result is written to memory (\cref{fig:mainloop-epilogue}). This is efficient on GPUs because the epilogue operates on data that is already produced by the GEMM tile, avoiding additional global-memory round trips for intermediate tensors. With modern pipelined schedules, this epilogue work can often be placed in the shadow of other tiles' mainloops, as in Hopper Ping-Pong GEMM and Blackwell TMEM-based pipelines.
Thus, we extend the epilogue beyond a place for simple post-processing such as scaling or bias addition, elevating it into a structured interface for fusing memory-bound computation into the lifetime of a GEMM tile.

Based on the above, we introduce \texttt{CODA}, a kernel abstraction prototype that realizes this interface. \texttt{CODA} keeps the GEMM mainloop fixed and exposes a small set of composable epilogue primitives for scaling, reductions, pairwise transformations, and accumulation. This programming model is deliberately constrained and yet expressive, as after reparameterization, these primitives cover nearly the entire forward and backward pass of a standard Transformer model while preserving efficiency. \texttt{CODA} inserts computation into the epilogue of a known high-performance GEMM before intermediate tensors are materialized in global memory, capturing a broad class of memory-bound computations surrounding dense linear algebra. Transformers are our primary application, but the same GEMM-plus-epilogue view applies more broadly whenever high-throughput matrix multiplication is surrounded by tile-expressible, data-movement-bound computations.

Finally, this structure makes automation more practical. Epilogue fusion is already established in high-performance GEMM libraries, but applying it to Transformer workloads remains a low-level engineering task. \texttt{CODA} targets this gap by providing Transformer-specific epilogue primitives on top of a tuned GEMM mainloop. Human or LLM-based authors can assemble these primitives into reparameterized Transformer kernels rather than synthesizing arbitrary CUDA. Across representative workloads, both authoring modes achieve high performance, suggesting that domain-specific epilogue abstractions can make established GEMM fusion techniques more programmable for LLM kernels.

\begin{figure*}[t]
    \centering
    \vspace{-15pt}
    \includegraphics[width=0.99\textwidth]{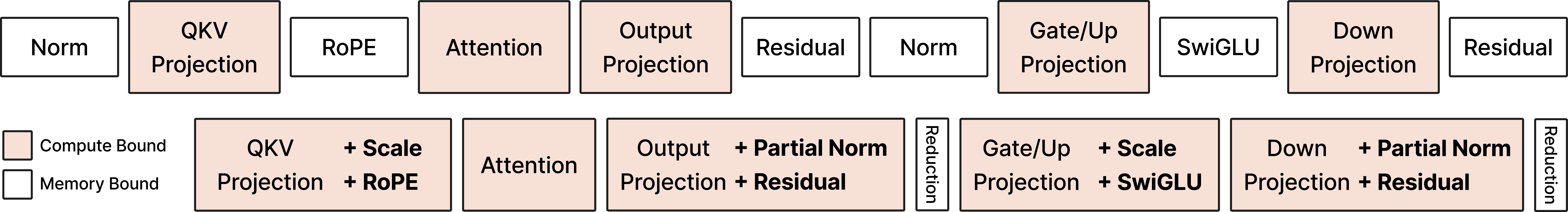}
    \caption{Forward pass of a standard Transformer layer. The top row shows the canonical formulation, which maps to a mix of compute- and memory-bound kernels. We reparameterize the computation so that most memory-bound operations are subsumed into the epilogues of compute-bound kernels.}
    \vspace{-17pt}
    \label{fig:fwd-pass}
\end{figure*}

\section{Background and Related Works}

\subsection{Programming Models for LLM Systems}
\label{sec:background-programming-model}

Modern LLM systems are programmed at multiple abstraction levels. Frameworks such as PyTorch and JAX express models as tensor-operator graphs and integrate naturally with automatic differentiation, but operator boundaries often become materialization ones.

Compiler systems lower tensor programs to optimized kernels through graph rewriting, scheduling, code generation, and autotuning~\cite{ansel2024pytorch,chen2018tvm,tillet2019triton}. Algebraic reformulation is another important source of performance, as shown by TASO~\cite{jia2019taso} and Mirage~\cite{wu2025mirage}. However, rapidly evolving accelerators make peak performance a moving target for general-purpose compilers.

Closer to the hardware level, programmers use kernel DSLs and libraries such as Triton~\cite{tillet2019triton}, ThunderKittens~\cite{spector2024thunderkittens,spector2025look,sul2025parallelkittens}, TileLang~\cite{wang2025tilelang}, CuTeDSL~\cite{Thakkar_CUTLASS_2023}, Gluon, and TLX, or rely on specialized LLM kernels in vLLM~\cite{kwon2023efficient}, SGLang~\cite{zheng2024sglang}, FlashInfer~\cite{ye2025flashinfer}, and Liger Kernels~\cite{hsu2024liger}. These approaches deliver high performance, but extending them to new transformations or backward computations still requires substantial low-level engineering.

\subsection{GEMM Mainloops and Epilogue Fusion}
\label{sec:background-gemm-epilogue}

Matrix multiplication is the central compute primitive in modern LLM workloads. A high-performance GEMM kernel is typically divided into a mainloop and an epilogue. The mainloop performs the tiled matrix multiply-accumulate computation, while the epilogue transforms the computed output tile and efficiently writes it back to global memory.

\begin{wrapfigure}{r}{0.39\textwidth}
    \centering
    \vspace{-25pt}
    \includegraphics[width=\linewidth]{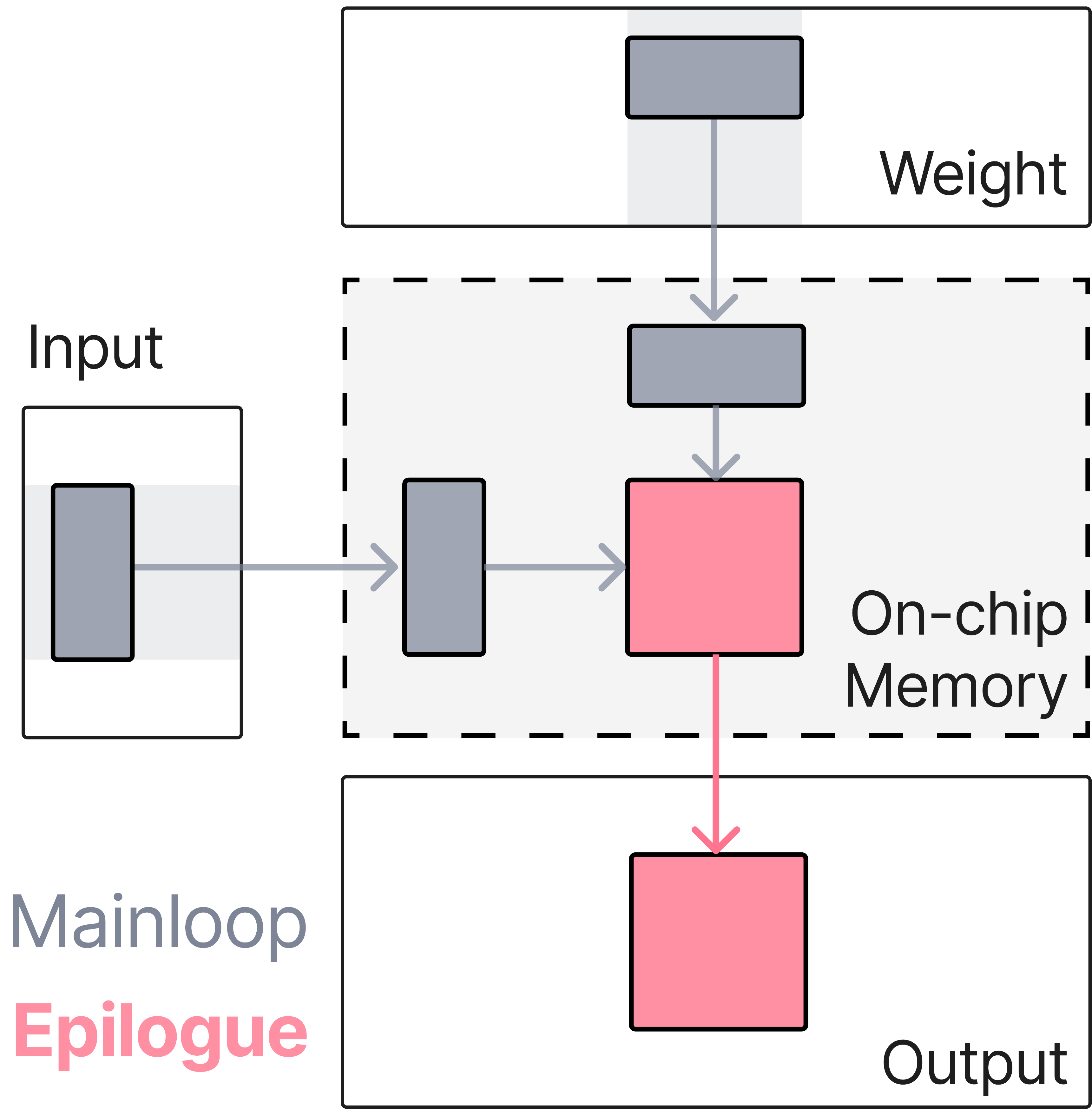}
    \vspace{-12pt}
    \caption{A GEMM mainloop computes output tiles; the epilogue transforms each tile before the final global-memory store.}
    \label{fig:mainloop-epilogue}
    \vspace{-12pt}
\end{wrapfigure}

The epilogue is a natural place to implement fusions because the output of the matmul is already present on chip close to compute cores. Practical epilogues commonly perform scaling, bias addition, activations, residual updates, data type conversions, tile-wise reductions and other output elementwise operations, avoiding separate kernel launches and extra global-memory round trips. Modern kernel libraries formalize this separation directly: CUTLASS~\cite{Thakkar_CUTLASS_2023} represents GEMM kernels as a composition of a collective mainloop and a collective epilogue, while Epilogue Visitor Trees further express epilogues as compositions of primitives~\cite{chen2024evt}.

This flexibility operates under a locality constraint. An epilogue sees only the local output tile, its accumulators, and consistently indexed auxiliary tensors, meaning that operations requiring global reductions or cross-tile communication must be reformulated into tile-local pieces or handled in a separate pass. CODA builds on this interface, keeping the high-performance GEMM mainloop fixed and using the epilogue as a programmable site for nearby memory-bound computation.

\section{CODA}
\label{sec:coda}

The previous section argued that GEMM epilogues are a natural place to fuse memory-bound computation into dense linear algebra. We now describe \texttt{CODA}, a GPU kernel abstraction that realizes this idea. \Cref{sec:epilogue-primitives} identifies a small set of epilogue primitives that map efficiently to GPU execution. \Cref{sec:reparameterization} shows how the non-attention and non-embedding portions of the Transformer forward and backward pass can be reparameterized using these primitives. Finally, \Cref{sec:implementations} describes their implementation and our LLM-oriented authoring workflow.

\subsection{Efficient Epilogue Primitives}
\label{sec:epilogue-primitives}

\texttt{CODA} programs the GEMM epilogue while keeping the mainloop fixed and highly optimized. For each output tile, an epilogue may load auxiliary data, transform accumulator values, emit auxiliary results, and store the final output. This interface is deliberately restricted to tile-local computation rather than arbitrary global communication. Our epilogue template, shown in \Cref{label:evt-template}, is inspired by Epilogue Visitor Trees~\cite{chen2024evt}.
\texttt{CODA} provides five classes of epilogue primitives:
\begin{enumerate}[leftmargin=*,nosep,nolistsep,noitemsep]
    \item \emph{Elementwise and pairwise maps:} apply local transformations to accumulator values, including residual updates, activations, RoPE-style rotations, and SwiGLU-style gates.
    \item \emph{Vector (Rank-1 Tensor) loads and stores:} load row or column vectors, broadcast them over an output tile, and optionally write vector-valued auxiliary results.
    \item \emph{Tile (Rank-2 Tensor) loads and stores:} load or store matrix tiles, such as residual streams, saved activations, or intermediate values needed by the backward pass.
    \item \emph{Tile (Rank-2 Tensor) reductions:} compute partial reductions over rows or columns of an output tile, to be combined later by a lightweight auxiliary kernel.
    \item \emph{Stateful transforms:} maintain running tile state, such as the max and sum-exp statistics used in online log-sum-exp and cross-entropy.
\end{enumerate}

These primitives are intentionally narrow, operating at a level low enough to compile to efficient epilogue code and expressive enough to capture the memory-bound operations surrounding GEMMs in our Transformer reparameterizations, as shown next.

\subsection{Reparameterizing Transformers as Epilogues}
\label{sec:reparameterization}

We now show that the primitive set above is sufficient for much of Transformer computation. After lightweight algebraic reparameterization, many non-attention and non-embedding components of a standard Transformer forward pass can be written as
\begin{align*}
    \text{GEMM:}\quad
    \vh = \vx \mW,
    \qquad
    \text{Epilogue:}\quad
    \vy[i,j] = \vf[i,j]\!\left(\vh[i,j]\right),
\end{align*}
where $[i,j]$ indexes an output tile and $\vf[i,j]$ is the tile function implemented in the GEMM epilogue. The epilogue is either fully tile-local, or tile-local up to partial results that are combined by a lightweight auxiliary reduction. We first apply this view to the forward pass, then show that independent tile functions preserve the same GEMM-epilogue structure in the backward pass.

\subsubsection{GEMM-Residual-RMSNorm-GEMM Pattern}
\label{sec:gemm-residual-rmsnorm-gemm}

A recurring pattern in pre-normalized Transformers is a GEMM followed by a residual update and normalization, then another GEMM. This pattern appears across several adjacent sublayers:
\begin{enumerate}[leftmargin=*,nosep,nolistsep,noitemsep]
    \item attention output projection $\rightarrow$ residual stream $\rightarrow$ RMSNorm $\rightarrow$ MLP gate/up projection;
    \item MLP down projection $\rightarrow$ residual stream $\rightarrow$ RMSNorm $\rightarrow$ attention QKV projection;
    \item final MLP down projection $\rightarrow$ residual stream $\rightarrow$ final RMSNorm $\rightarrow$ language modeling head.
\end{enumerate}
Although these cases are usually written as parts of different modules, they share the same computational structure:
\begin{align*}
\vy
= \operatorname{RMSNorm}(\vx \mW_0 + \vz, \boldsymbol{\gamma}) \mW_1
= \Bigl(r \, \bigl(\vx \mW_0 + \vz\bigr) \odot \boldsymbol{\gamma}\Bigr) \mW_1,
\label{eq:gemm-res-rms-gemm}
\end{align*}
where $\vz$ denotes the residual stream and
$r = 1 / \operatorname{rms}(\vx \mW_0 + \vz)$ is the row-wise inverse RMS factor. This pattern crosses the usual module boundary: it couples the output projection of one sublayer with the input projection of the next.

Residual addition and multiplication by the RMSNorm weight $\boldsymbol{\gamma}$ are tile-local, so they can be fused into a GEMM epilogue. The row-wise factor $r$, however, requires a reduction across the hidden dimension, which is larger than a single output tile. In the canonical computation, $r$ is applied before the second GEMM, creating an apparent dependency between normalization and the next projection.

\begin{wrapfigure}{r}{0.6\textwidth}
  \centering
  \vspace{-17pt}
  \includegraphics[width=0.99\linewidth]{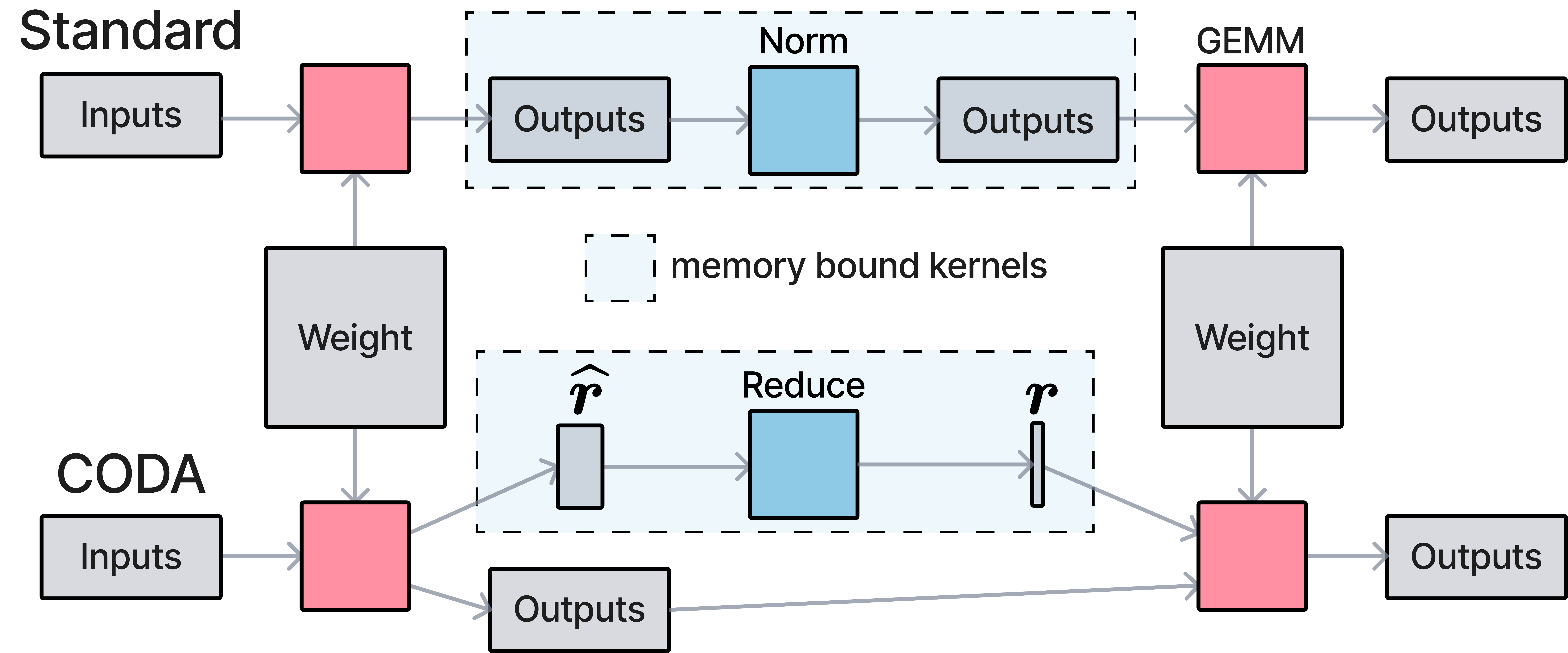}
  \vspace{-1pt}
  \caption{GEMM-RMSNorm-GEMM reparameterization.}
  \label{fig:gemm-res-rms-gemm-kernels}
  \vspace{-11pt}
\end{wrapfigure}

We address the reduction by splitting it into two levels. The first GEMM epilogue computes tile-local partial reductions, and a small auxiliary kernel reduces these partials across tiles to obtain $r$. Since the auxiliary kernel reads a few partial values per tile rather than the full activation tensor, its memory traffic is much smaller than that of a standalone RMSNorm.

The apparent dependency on $r$ can be removed algebraically. Since $r$ is shared across the row, it commutes with the second GEMM:
\begin{align*}
\vy
= \Bigl(r \, \bigl(\vx \mW_0 + \vz\bigr) \odot \boldsymbol{\gamma}\Bigr) \mW_1
= r \, \Bigl(\bigl(\vx \mW_0 + \vz\bigr) \odot \boldsymbol{\gamma}\Bigr) \mW_1 .
\end{align*}
Thus, the row-wise scale does not need to be applied before the second GEMM. It can instead be delayed to the epilogue of the second GEMM, after the projection has been computed.

Concretely, the computation decomposes into two GEMMs and one lightweight reduction (\cref{fig:gemm-res-rms-gemm-kernels}):
\begin{align*}
&\text{GEMM 1:}& \vh_0 &= \vx \mW_0, \quad
&\text{Epilogue 1:}& \quad
&\vh_1[i,j] &= \vh_0[i,j] + \vz[i,j], \\
&&&&&&\vh_2[i,j] &= \vh_1[i,j] \odot \boldsymbol{\gamma}[j], \\
&&&&&&\widehat{\vr}[i,j] &= \operatorname{partialRMS}(\vh_1[i,j]),\\[7pt]
&\text{GEMM 2:} &\vh_3 &= \vh_2 \mW_1, \qquad
&\text{Epilogue 2:}& \quad
&\vy[i,j] &= r[i] \, \vh_3[i,j].
\end{align*}
\begin{wrapfigure}{r}{0.30\textwidth}
  \centering
  \vspace{-17pt}
  \includegraphics[width=0.99\linewidth]{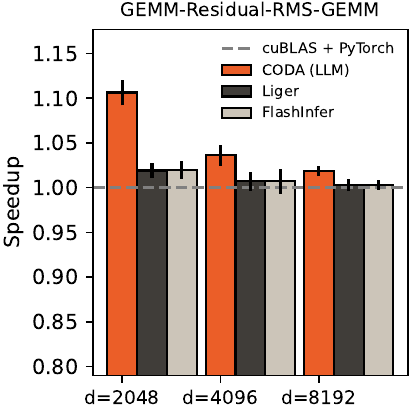}
  \vspace{-15pt}
  \caption{Benchmarks.}
  \label{fig:gemm-res-rms-gemm-results}
  \vspace{-27pt}
\end{wrapfigure}
Here $r = 1 / \sqrt{\operatorname{reduce}(\widehat{\vr}) + \epsilon}$ is computed by a small auxiliary reduction over the tile partials. This decomposition replaces a standalone RMSNorm kernel with tile-local epilogue work around the two GEMMs, plus a lightweight auxiliary reduction.

In \Cref{fig:gemm-res-rms-gemm-results}, we benchmark this reparameterization against existing implementations on LLaMA-style configurations with a batch of 16K tokens. We vary the hidden dimension across representative model scales, with $d \in \{2048, 4096, 8192\}$ corresponding roughly to 1B, 7B, and 70B models, respectively. Our GEMM-Epilogue kernel is generated by an LLM provided with the above abstractions (explained in more detail in \cref{sec:llm-kernel-authoring}).

\begin{wrapfigure}{r}{0.27\textwidth}
    \centering
    \vspace{-20pt}
    \includegraphics[width=\linewidth]{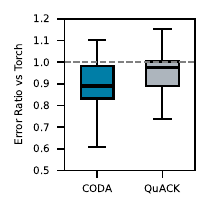}
    \vspace{-22pt}
    \caption{Relative error.}
    \label{fig:gemm-rms-gemm-numerics}
    \vspace{-10pt}
\end{wrapfigure}
\paragraph{Numerics.}
The reparameterization changes where the RMSNorm scale is applied: the row-wise factor $r$ is delayed from before the second GEMM to the second GEMM epilogue. We compare \texttt{BF16} GEMM-RMSNorm-GEMM outputs against an \texttt{FP32} reference on Llama-3 8B layers. We report the errors of \texttt{CODA} and QuACK, on which our GEMM template is based, normalized by the error of the standard PyTorch path. \Cref{fig:gemm-rms-gemm-numerics} suggests that a more accurate GEMM mainloop can reduce numerical error, and that \texttt{CODA}'s reparameterized epilogue can reduce it further.

\subsubsection{GEMM with Pairwise Activations}
\label{sec:gemm-pairwise-activation}

A second common pattern in Transformers is a GEMM followed by a \emph{pairwise} activation. Unlike an elementwise activation, which transforms each feature independently, a pairwise activation consumes two adjacent feature values and produces one or two outputs. 
\begin{align*}
\vh = \vx \mW,
\qquad
\vh_a[i,j], \vh_b[i,j] = \operatorname{split}\left(\vh[i,j]\right),
\qquad
\vy[i,j] = \vf[i,j]\left(\vh_a[i,j], \vh_b[i,j]\right).
\end{align*}
\begin{wrapfigure}{r}{0.375\textwidth}
  \centering
  \vspace{-15pt}
  \includegraphics[width=\linewidth]{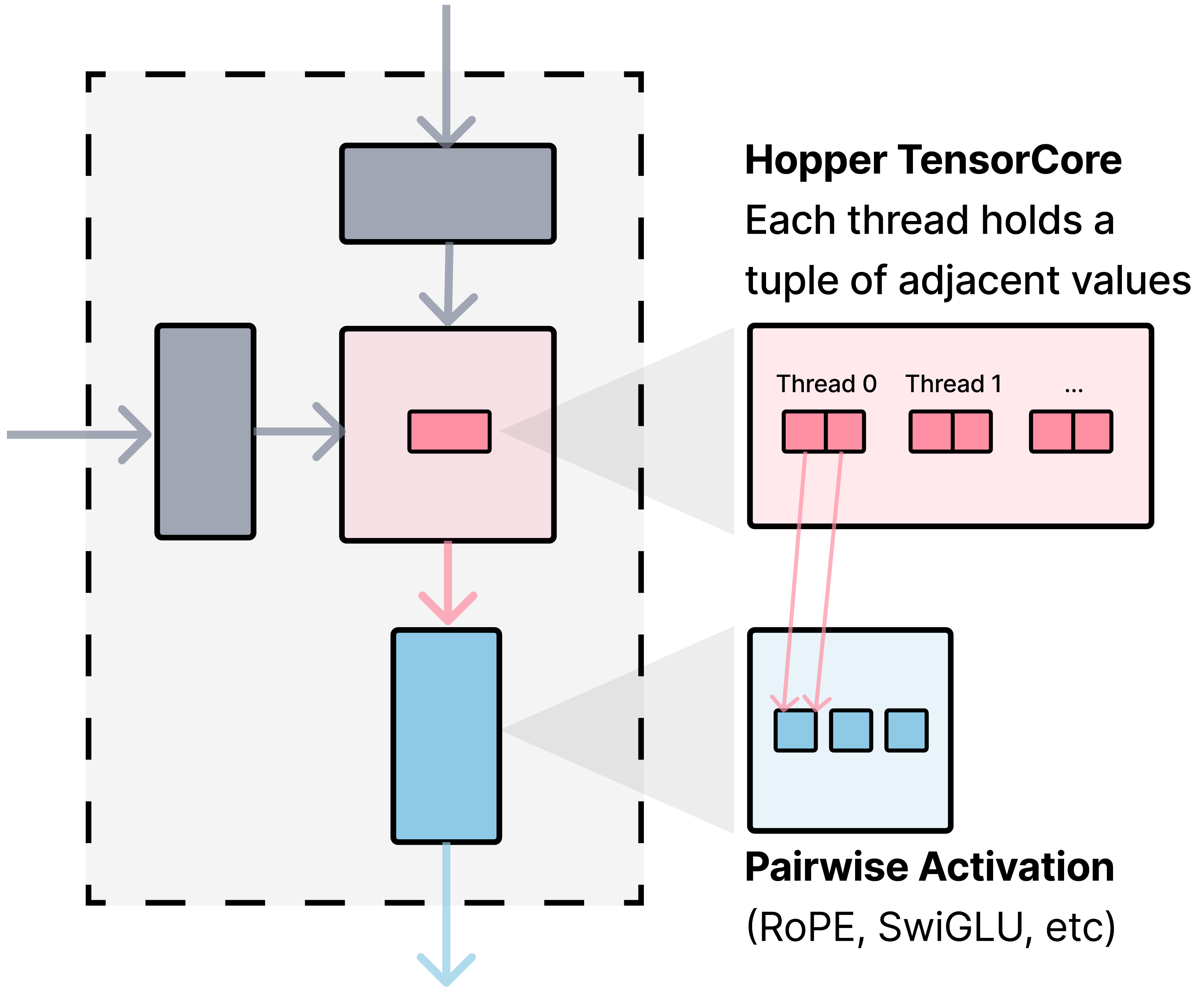}
  \vspace{-12pt}
  \caption{Pairwise activations operate on local feature pairs in the GEMM epilogue.}
  \label{fig:kernel-1}
  \vspace{-27pt}
\end{wrapfigure}
This form captures several operations in Transformer blocks:
\begin{itemize}[leftmargin=*,nosep,nolistsep,noitemsep]
    \item RoPE rotates each feature pair and return two outputs;
    \item SwiGLU combines gate and value stream into one output;
    \item SwiGLU backward pass maps one incoming gradient into gradients for both paired inputs.
\end{itemize}
Pairwise activations couple neighboring feature lanes and may change the feature dimension. A naive implementation materializes the GEMM output, splits it into paires, and applies the activation in a separate kernel. This adds memory traffic and sometimes materializes an expanded intermediate, as in SwiGLU.

Instead, we arrange paired features to be adjacent along the output-feature dimension. This matches the Hopper Tensor Core accumulator layout exposed to the epilogue, where each thread holds a small tuple of adjacent output values in registers before they are stored. The epilogue can therefore apply $f$ directly to each pair with register-level computation.

This removes the standalone activation kernel and avoids materializing the paired intermediate in global memory. The same idea applies to dimension-preserving operations such as RoPE, dimension-reducing operations such as SwiGLU, and dimension-expanding operations in the backward pass, as long as the pairing is reflected in the GEMM output layout. See~\Cref{fig:gemm-epilogue} for performance benchmarks.
\begin{figure}[t]
    \centering
    \vspace{-15pt}
    \includegraphics[width=1.0\textwidth]{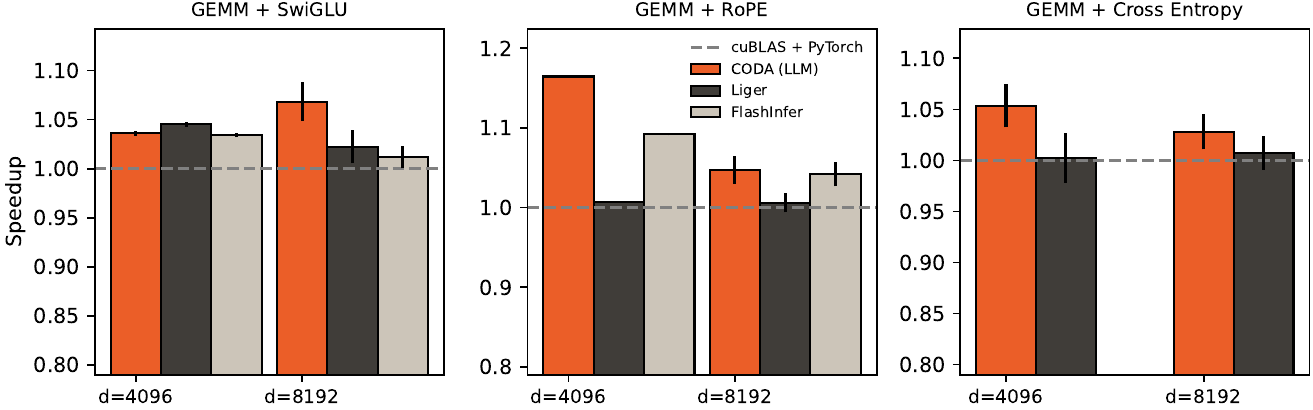}
    \vspace{-15pt}
    \caption{
    Kernel-level speedups for representative GEMM-plus-epilogue primitives across $MNK$ sizes. RoPE uses an output dimension of $3N$ for QKV projections, and cross-entropy uses a $32\mathrm{K}$ vocabulary. Speedups are relative to cuBLAS with \texttt{torch.compile}.
    }
    \label{fig:gemm-epilogue}
    \vspace{-10pt}
\end{figure}

\subsubsection{GEMM with Cross-Entropy Loss}
\label{sec:gemm-cross-entropy}

Cross-entropy loss can also be expressed as a GEMM with epilogue-side reductions, as shown by Cut Cross-Entropy~\cite{wijmans2025cut}. Let $\vh_i = \vx_i \mW_{\mathrm{lm}}$ be the logits for token $i$, and let $\vy_i$ be its target label. The per-token loss is
\begin{align*}
\ell_i
= -\vh_{i,\vy_i} + \log \sum_k \exp(\vh_{i,k}).
\end{align*}
Thus, the loss decomposes into an indexed logit and a row-wise log-sum-exp over vocabulary entries.

Both terms fit the GEMM-plus-epilogue pattern. The indexed logit can be selected from the GEMM output tile using the target label, while the LSE can be accumulated as tile-local maximum and sum-exp statistics. A small auxiliary reduction then combines these statistics across tiles, avoiding a standalone memory-bound softmax over the full logits.\footnote{We use a separate final reduction rather than atomics, and materialize logits to simplify the backward pass.} See \Cref{fig:gemm-epilogue} for performance benchmarks.

\subsubsection{Backward Pass}
\label{sec:backward-gemm-residual-rmsnorm-gemm}

The preceding sections show that much of the Transformer forward pass can be reparameterized as GEMMs with epilogues, plus lightweight auxiliary reductions. We now show that the backward pass preserves the same structure.

\paragraph{GEMM with elementwise epilogue.}
Consider two GEMMs separated by an elementwise epilogue:
\begin{align*}
    \vh = \vx \mW_0, \qquad
    \vh^\prime = f(\vh), \qquad
    \vy = \vh^\prime \mW_1 ,
\end{align*}
where $f$ is applied elementwise. Given an upstream gradient $\nabla_{\vy}\mathcal{L}$, reverse-mode differentiation gives
\begin{align*}
    \nabla_{\vh^\prime}\mathcal{L}
    = \nabla_{\vy}\mathcal{L} \mW_1^\top, \qquad
    \nabla_{\vh}\mathcal{L}
    = \nabla_{\vh^\prime}\mathcal{L} \odot f^\prime(\vh), \qquad
    \nabla_{\vx}\mathcal{L}
    = \nabla_{\vh}\mathcal{L} \mW_0^\top .
\end{align*}
Thus, the backward computation has the same structure as the forward computation: GEMM, local transformation, GEMM. The only difference is the direction of fusion. In the forward pass, $f$ is fused into the epilogue of the \textit{preceding} GEMM that produces $\vh$; in the backward pass, multiplication by $f^\prime(\vh)$ is fused into the epilogue of the \textit{following} GEMM that produces $\nabla_{\vh^\prime}\mathcal{L}$ (\Cref{fig:gemm-epilogue-fwd-bwd}).

\begin{theorem}
\label{thm:gemm-epilogue-backward}
Consider a sequence of GEMM-with-epilogue blocks followed by a final GEMM:
\begin{align*}
\vh_{\ell} &= \vx_{\ell-1}\mW_{\ell}, \qquad
\vx_{\ell}[i,j] =
\vf_{\ell}[i,j]\!\left(\vh_{\ell}[i,j]\right),
\qquad \ell=1,\ldots,L-1, \\
\vh_L &= \vx_{L-1}\mW_L .
\end{align*}
Assume each tile function $\vf_{\ell}[i,j]$ acts only on its corresponding
GEMM output tile. Then the activation gradients can be computed with the same
GEMM-with-epilogue structure:
\begin{align*}
\nabla_{\vx_{\ell-1}}\mathcal{L}
&=
\nabla_{\vh_{\ell}}\mathcal{L}\mW_{\ell}^{\top}, \qquad
\nabla_{\vh_{\ell-1}}\mathcal{L}[i,j]
=
\vg_{\ell-1}[i,j]\!\left(
\nabla_{\vx_{\ell-1}}\mathcal{L}[i,j],\;
\vh_{\ell-1}[i,j]
\right),
\qquad \ell=L,\ldots,2, \\
\nabla_{\vx_0}\mathcal{L}
&=
\nabla_{\vh_1}\mathcal{L}\mW_1^{\top}.
\end{align*}
Here $\vg_{\ell}[i,j]$ is the tile-local backward rule for
$\vf_{\ell}[i,j]$: it maps the gradient of the epilogue output tile
$\vx_{\ell}[i,j]$ to the gradient of the epilogue input tile
$\vh_{\ell}[i,j]$. The weight gradients are GEMMs:
\begin{align*}
\nabla_{\mW_{\ell}}\mathcal{L}
=
\vx_{\ell-1}^{\top}\nabla_{\vh_{\ell}}\mathcal{L},
\qquad \ell=1,\ldots,L .
\end{align*}
Thus, tile-local epilogues in the forward pass induce tile-local epilogues in
the backward pass, while the surrounding linear maps remain GEMMs.
\end{theorem}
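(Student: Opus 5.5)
The plan is reverse-mode differentiation through the chain $\vx_0 \to \vh_1 \to \vx_1 \to \cdots \to \vh_L$, treating each step as either a linear map (the GEMM) or a tile-local map (the epilogue). Throughout, $\mathcal{L}$ depends on the parameters only through $\vh_L$, and $\nabla_{\vh_L}\mathcal{L}$ is given. I will use the standard fact that for $\vh = \vx\mW$ the vector–Jacobian products are
\begin{align*}
\nabla_{\vx}\mathcal{L} = \nabla_{\vh}\mathcal{L}\,\mW^{\top},
\qquad
\nabla_{\mW}\mathcal{L} = \vx^{\top}\nabla_{\vh}\mathcal{L}.
\end{align*}
This follows by writing $d\mathcal{L} = \langle \nabla_{\vh}\mathcal{L}, d\vx\,\mW + \vx\,d\mW\rangle$ and using $\langle A, BC\rangle = \langle AC^{\top}, B\rangle = \langle B^{\top}A, C\rangle$ in the Frobenius inner product. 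Applied at layer $\ell$, this gives $\nabla_{\vx_{\ell-1}}\mathcal{L} = \nabla_{\vh_\ell}\mathcal{L}\,\mW_\ell^\top$ and the weight-gradient formula $\nabla_{\mW_\ell}\mathcal{L} = \vx_{\ell-1}^\top \nabla_{\vh_\ell}\mathcal{L}$. The latter holds for every $\ell$, since $\mW_\ell$ enters the computation only through $\vh_\ell$.

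The key step is the epilogue. Partition $\vh_\ell$ and $\vx_\ell$ into output tiles indexed by $[i,j]$. Since $\vx_\ell[i,j]$ depends only on $\vh_\ell[i,j]$, the Jacobian $\partial \vx_\ell/\partial \vh_\ell$ is block diagonal with respect to this partition; the off-diagonal blocks vanish. The chain rule then gives
\begin{align*}
\nabla_{\vh_\ell}\mathcal{L}[i,j]
= \Bigl(\tfrac{\partial \vf_\ell[i,j]}{\partial \vh_\ell[i,j]}\Bigr)^{\!\top}\nabla_{\vx_\ell}\mathcal{L}[i,j]
=: \vg_\ell[i,j]\bigl(\nabla_{\vx_\ell}\mathcal{L}[i,j],\, \vh_\ell[i,j]\bigr).
\end{align*}
This defines $\vg_\ell[i,j]$ as the local VJP, which depends only on the saved tile $\vh_\ell[i,j]$ and the incoming gradient tile. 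One point needs care: the tiling of $\nabla_{\vx_\ell}\mathcal{L} = \nabla_{\vh_{\ell+1}}\mathcal{L}\,\mW_{\ell+1}^\top$ as a GEMM output has the same shape and index set as $\vx_\ell$. So the tile $[i,j]$ produced by that backward GEMM is exactly the tile on which $\vg_\ell[i,j]$ acts, and it can be applied in that GEMM's epilogue.

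With these two facts the argument is a downward induction on $\ell$ from $L$ to $1$. At each stage, a backward GEMM produces $\nabla_{\vx_{\ell-1}}\mathcal{L}$, and its epilogue applies $\vg_{\ell-1}$ to obtain $\nabla_{\vh_{\ell-1}}\mathcal{L}$. The recursion ends with the bare GEMM $\nabla_{\vx_0}\mathcal{L} = \nabla_{\vh_1}\mathcal{L}\,\mW_1^\top$, since $\vx_0$ has no epilogue. This matches the indexing in the statement exactly.

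The main obstacle is not the calculus but making the tile-locality claim precise. One has to say that "acts only on its corresponding tile" means block-diagonal dependence, and check that the backward GEMM uses the same tile grid on the $\vx_\ell$-shaped output. Where $\vf_\ell$ involves a row-wise factor combined via an auxiliary reduction, as in \Cref{sec:gemm-residual-rmsnorm-gemm}, the map is not strictly tile-local. For the theorem as stated I would exclude that case by hypothesis. I would also remark that it reduces to the tile-local case once the reduced statistic is treated as an additional saved input to $\vg_\ell$, whose own gradient is accumulated by a separate tile-partial reduction.
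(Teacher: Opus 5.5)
Your proposal is correct and takes essentially the same route as the paper's proof sketch. The two GEMM vector--Jacobian products give the activation and weight gradients, and the tile-local backward rule $\vg_\ell[i,j]$ is the transposed tile Jacobian, matching the $\operatorname{unvec}(\mJ_{[i,j]}^\top \operatorname{vec}(\Delta))$ formula in the appendix, so it inherits tile-locality from $\vf_\ell[i,j]$. Your Frobenius-inner-product derivation, block-diagonal Jacobian argument, and check that the backward GEMM's output tile grid matches that of $\vx_\ell$ make explicit steps that the paper leaves implicit.
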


\begin{proof}[Proof sketch]
For the GEMM $\vh_{\ell}=\vx_{\ell-1}\mW_{\ell}$, reverse-mode differentiation gives
\begin{align*}
\nabla_{\vx_{\ell-1}}\mathcal{L}
=
\nabla_{\vh_{\ell}}\mathcal{L}\mW_{\ell}^{\top},
\qquad
\nabla_{\mW_{\ell}}\mathcal{L}
=
\vx_{\ell-1}^{\top}\nabla_{\vh_{\ell}}\mathcal{L},
\end{align*}
which are both GEMMs. For the epilogue
$\vx_{\ell}[i,j]=\vf_{\ell}[i,j](\vh_{\ell}[i,j])$, define the local backward
rule by
\begin{align*}
\nabla_{\vh_{\ell}}\mathcal{L}[i,j]
=
\vg_{\ell}[i,j]\!\left(
\nabla_{\vx_{\ell}}\mathcal{L}[i,j],\;
\vh_{\ell}[i,j]
\right).
\end{align*}
Since each $\vf_{\ell}[i,j]$ depends only on its own tile, the corresponding
$\vg_{\ell}[i,j]$ also acts only on that tile. The backward pass therefore
introduces no new cross-tile communication and preserves the
GEMM-with-epilogue structure.
\end{proof}

\begin{figure}[t]
  \centering
  \vspace{-15pt}
  \includegraphics[width=\linewidth]{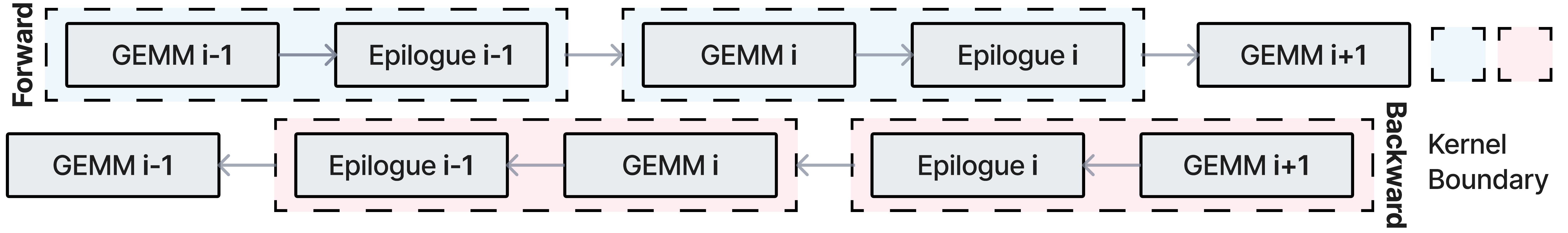}
  \vspace{-17pt}
  \caption{Forward and backward fusion for GEMM--epilogue blocks. Forward epilogues attach to the GEMM that produces their input, while backward epilogues attach to the GEMM that produces the gradient with respect to their output.}
  \label{fig:gemm-epilogue-fwd-bwd}
  \vspace{-10pt}
\end{figure}

\paragraph{RMSNorm backward.}
RMSNorm is the main case where the backward pass is not purely tile-local. Its backward rule introduces two reductions: a row-wise statistic needed for the input gradient, and a feature-wise reduction across rows for the RMSNorm weight gradient. A direct implementation computes both in a standalone RMSNorm backward kernel, requiring additional reads of activation-sized tensors.
However, the row-wise statistic can be moved to a neighboring GEMM boundary. Consider
\begin{align*}
\vh_0 = \vx \mW_0, \qquad
\vh_1 = f(\vh_0), \qquad
\vh_2 = \operatorname{RMSNorm}(\vh_1,\boldsymbol{\gamma}), \qquad
\vy = \vh_2 \mW_1 .
\end{align*}
RMSNorm backward requires the row-wise inner product
\begin{align*}
\vs
=
\frac{1}{d}
\operatorname{sum}_{\mathrm{cols}}
\left(
\nabla_{\vh_2}\mathcal{L} \odot \vh_2
\right).
\end{align*}
Using $\nabla_{\vh_2}\mathcal{L}=\nabla_{\vy}\mathcal{L}\mW_1^\top$ and $\vy=\vh_2\mW_1$, this statistic can be equivalently written as
\begin{align*}
\vs
=
\frac{1}{d}
\operatorname{sum}_{\mathrm{cols}}
\left(
\nabla_{\vy}\mathcal{L} \odot \vy
\right).
\end{align*}
This identity changes where the statistic is computed. Instead of launching a standalone RMSNorm backward kernel to read $\vh_2$ and $\nabla_{\vh_2}\mathcal{L}$, we accumulate the same row-wise quantity at a boundary where $\vy$ and $\nabla_{\vy}\mathcal{L}$ are already available, thereby exposing the computation to epilogue fusion.

In consecutive Transformer patterns, each GEMM epilogue can therefore accumulate the row-wise statistic needed by the preceding RMSNorm backward. The RMSNorm weight gradient is handled similarly by emitting tile partials for the reduction across rows. Overall, RMSNorm backward becomes GEMM--epilogue kernels plus lightweight auxiliary reductions over tile partials. We give the full derivation and kernel organization in \Cref{sec:backward-rmsnorm-epilogue}, with benchmarks in \Cref{fig:block-experiments}.

\subsection{Implementation}
\label{sec:implementations}

We implement \texttt{CODA} on top of CuTeDSL, which provides Python-level kernel authoring while retaining low-level control over details such as layouts and memory movement.

\textbf{Data movement.}
Vector loads handle small broadcast operands such as RMSNorm weights. These values are staged once in shared memory and reused across subtiles. Tile loads handle larger operands such as residual activations, and uses Tensor Memory Accelerator transfers between global memory and shared memory, allowing data movement to overlap with epilogue computation. Stores follow similar tile-granular path for transformed outputs, saved intermediates, and reduction partials.

\textbf{Local computation.}
Pairwise maps act on neighboring features, covering dimension-preserving operations such as RoPE, dimension-reducing operations such as SwiGLU, and dimension-expanding operations in the backward pass. When a map changes the feature dimension, the epilogue performs the corresponding local layout adjustment, such as compacting dimension-reducing outputs or packing pairs of 16-bit values for dimension-expanding outputs.

\textbf{Reductions.}
Tile-wise reductions follow the ownership of GEMM output fragments. Row-wise reductions are accumulated by the warp that owns the row. Column-wise reductions may span multiple warps, so each warp first produces a partial result, and these partials are combined through shared memory.

\subsubsection{LLM-Oriented Authoring}
\label{sec:llm-kernel-authoring}

\texttt{CODA} is designed both for LLM workloads and for LLM-assisted authoring. Rather than asking a model to synthesize arbitrary CUDA or discover a hardware schedule from scratch, \texttt{CODA} exposes a constrained space of epilogue programs around expert-designed GEMM mainloops. Its primitives already encode efficient implementation strategies, so LLM-based authoring becomes a problem of composing vector loads, tile loads, pairwise maps, reductions, and stores for a given Transformer computation. This lightweight use of LLMs is complementary to prior work on kernel generation, which often relies on orchestration, search, execution feedback, or post-training~\cite{ouyang2025kernelbench,su2025cuda,chen2025cuda,lange2025towards,yuksekgonul2026learning}.

Because CuTeDSL is relatively new, current models have limited exposure to its idioms. We therefore provide curated demonstrations for each abstraction. In practice, the repository itself acts as a growing demonstration set, with new kernels being written by adapting and composing existing examples.

\paragraph{Compositions.}
Transformer kernels often combine several epilogue operations, such as residual addition and RMSNorm scaling. Monolithic fused epilogues lead to large, repetitive implementations that are difficult to place in context. \texttt{CODA} instead represents each epilogue as a composition of reusable primitives: the LLM specifies the local epilogue program, while the library supplies the fixed GEMM mainloop and implementation pattern for each primitive. New fused kernels are therefore assembled from reusable building blocks instead of being rewritten from scratch.

\begin{figure}[t]
    \centering
    \vspace{-15pt}
    \includegraphics[width=0.99\textwidth]{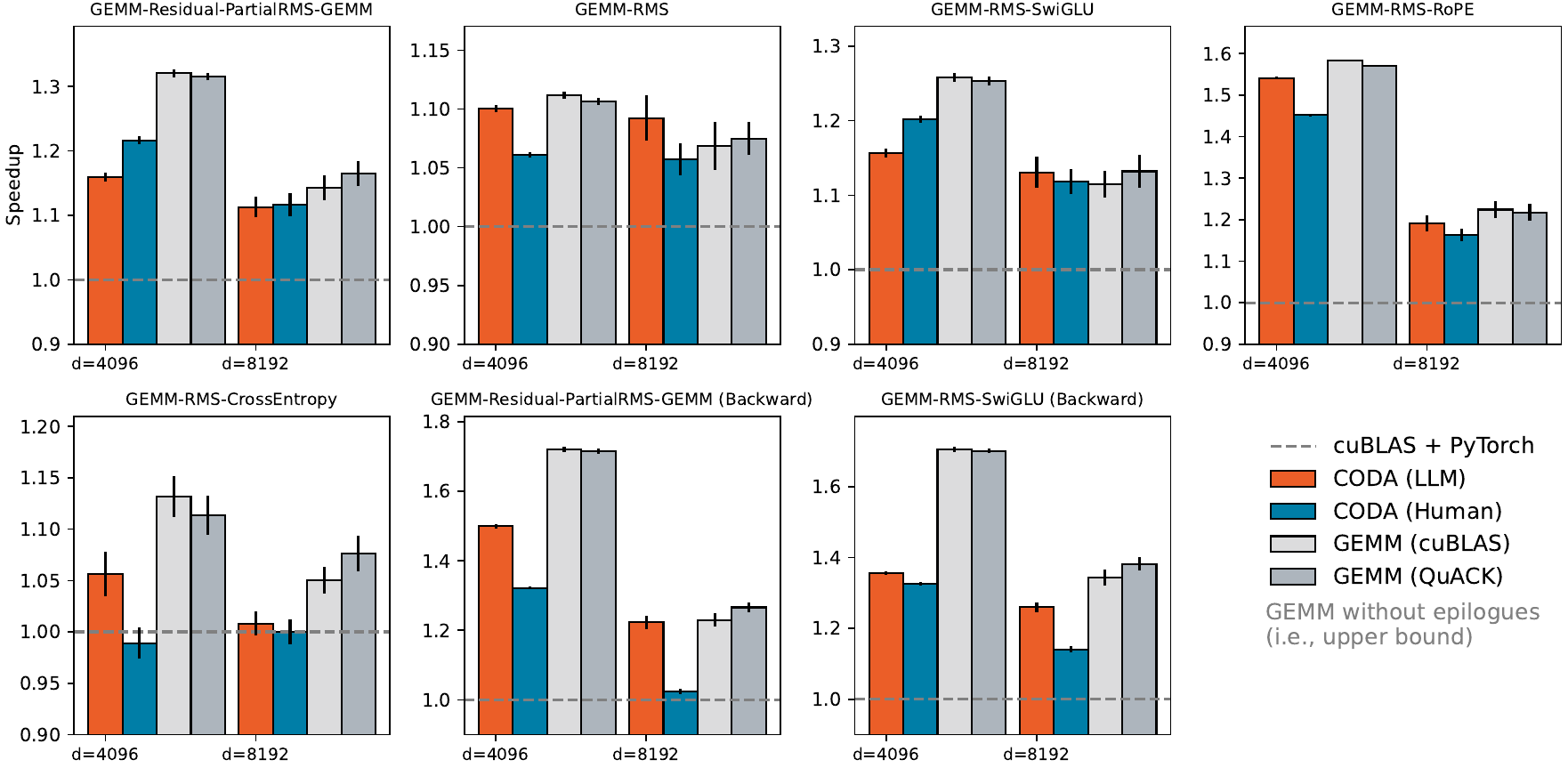}
    \vspace{-5pt}
    \caption{
    Kernel-level speedups on reparameterized Transformer kernels relative to cuBLAS with \texttt{torch.compile}. Raw GEMM baselines using PyTorch/cuBLAS and QuACK are included as reference ceilings, since they execute only the matrix multiplication and no epilogue work.
    }
    \label{fig:kernel-benchmarks}
    \vspace{-19pt}
\end{figure}

\section{Experiments}

\label{sec:setup}
After the reparameterizations in \Cref{sec:reparameterization}, we obtain a compact benchmark suite of GEMM-plus-epilogue kernels spanning the Transformer++ forward and backward pass (\Cref{sec:list-of-kernels}). The suite covers nearly all computation outside attention, embeddings, auxiliary reductions, and lightweight glue operations.
We evaluate two implementations. \texttt{CODA~(LLM)} uses Claude Code to generate most kernels from a written specification, curated examples, and a running log of implementation tips, with lightweight human supervision. \texttt{CODA~(human)} is written by human programmers using the same high-level reparameterizations, but without access to the exact \texttt{CODA} primitive set.

We compare against cuBLAS with \texttt{torch.compile}, as well as optimized LLM kernel libraries including Liger Kernel~\cite{hsu2024liger} and FlashInfer~\cite{ye2025flashinfer}. Because our reparameterized kernels do not always have one-to-one counterparts in existing libraries, we compose the closest available optimized primitives and fall back to PyTorch operators as needed. We apply \texttt{torch.compile} to each method when compatible. Additional setup details are given in \Cref{sec:setup-details}.

\paragraph{Kernel Benchmarks.}
\label{sec:kernel-benchmarks}

We first evaluate \texttt{CODA} at the individual-kernel level. Unless otherwise noted, we use square GEMM shapes with $M{=}N{=}K \in \{4096,8192\}$. For cross-entropy kernels, we set the vocabulary dimension to $32768$. For RoPE kernels, we use $N_{\mathrm{rope}}{=}3N$ to account for QKV-style projections and use precomputed $\cos$ and $\sin$ tables.\footnote{For \texttt{CODA}, we additionally pre-broadcast and extend these tables across batch, head, and QKV dimensions to avoid in-kernel branching, at the cost of additional input traffic.} For kernels that emit partial reductions, we benchmark only the fused GEMM kernel with reduction tile size $128$; auxiliary reductions are included in the block-level benchmarks below. We benchmark functions using Triton's \texttt{do\_bench} and show means and standard deviations across $30$ runs. 

We evaluate two groups of kernels from \Cref{sec:list-of-kernels}. The first group consists of standard Transformer-style kernels, such as GEMM with RoPE, SwiGLU, or cross-entropy epilogues. These kernels have close counterparts in existing libraries, so we compare against cuBLAS with \texttt{torch.compile}, Liger Kernels, and FlashInfer when applicable. Results are shown in \Cref{fig:gemm-epilogue}.

The second group consists of reparameterized Transformer forward and backward kernels, which generally do not have one-to-one equivalents in existing libraries. For these kernels, we primarily compare against cuBLAS with \texttt{torch.compile}, and additionally report raw GEMM from PyTorch/cuBLAS and QuACK.\footnote{\url{https://github.com/Dao-AILab/quack}} These raw GEMM omit epilogue work and therefore serve as reference ceilings for the attainable throughput. Results are shown in \Cref{fig:kernel-benchmarks}.

\begin{figure}[t]
    \centering
    \vspace{-15pt}
    \includegraphics[width=0.99\textwidth]{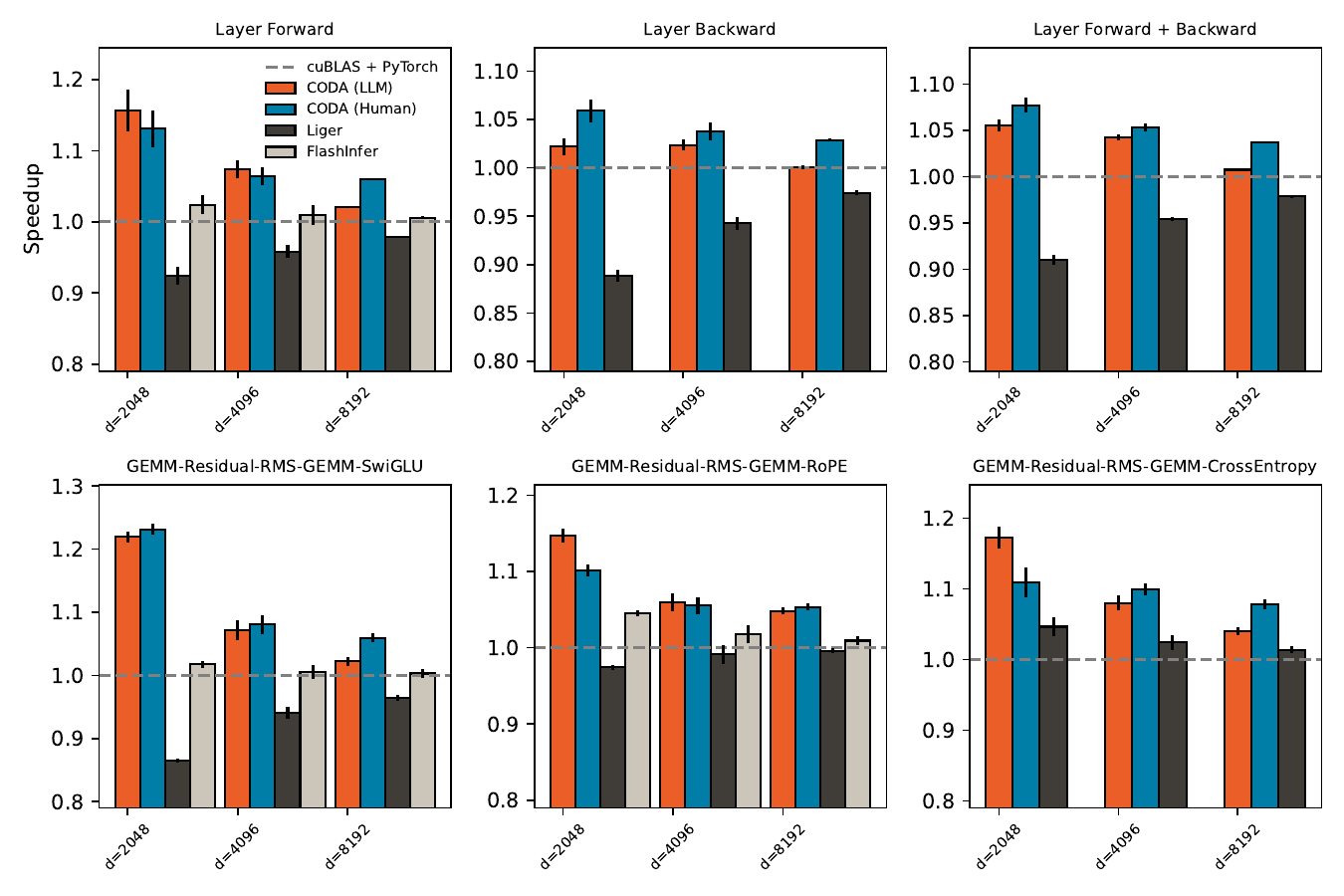}
    \vspace{-3pt}
    \caption{
    Block-level speedups for reparameterized Transformer kernel sequences, including auxiliary reductions and lightweight glue operations. Here, a layer denotes two consecutive GEMM-Residual-RMSNorm-GEMM blocks with the SwiGLU and RoPE activations, respectively.
    }
    \label{fig:block-experiments}
\end{figure}

\vspace{-2pt}
\paragraph{Block Benchmarks.}
\label{sec:block-benchmarks}

We next benchmark kernel sequences corresponding to Transformer sublayers and full layers, which we call \emph{blocks}. We use hidden sizes in $\{2048,4096,8192\}$, roughly matching 1B, 7B, and 70B model scales, with FFN expansion rate $8/3$ rounded to multiples of $256$ and vocabulary size $32768$. Unlike isolated kernel benchmarks, these measurements include auxiliary reductions and lightweight glue operations.

For the forward pass, we compare each reparameterized sequence against the closest available sequence of optimized operators. For the backward pass, the reparameterization changes the dependency structure: each sublayer emits partial statistics needed by the preceding RMSNorm backward. Individual backward sublayers therefore do not have direct PyTorch counterparts, so we report backward results at the full-layer level. In \texttt{CODA}, a layer consists of two consecutive GEMM-Residual-RMSNorm-GEMM blocks covering the SwiGLU and RoPE paths. Results are shown in \Cref{fig:block-experiments}.

\vspace{-2pt}
\section{Conclusion and Limitations}
\texttt{CODA} reparameterizes much of Transformer computation as GEMM epilogues, reducing memory-bound overhead while preserving GEMM efficiency. Its constrained abstraction supports high-performance kernels authored by both humans and LLMs.

\vspace{-2pt}
\paragraph{Limitations.}
Our reparameterizations target a common Transformer architecture; extending them to broader model families is future work. \texttt{CODA} currently focuses on single-GPU kernels and does not yet address distributed execution. Finally, while reparameterization improves efficiency, it can obscure module boundaries and algorithmic semantics, making integration with framework-level abstractions more challenging.

\section*{Acknowledgment}
We thank Beshr Islam Bouli, Kaiming Cheng, Xinle Cheng, Ryan Chin, Tarushii Goel, Wentao Guo, Lucas Torroba Hennigen, Alicia Li, Mayank Mishra, Jyothish Pari, Caiming Xiong, Nicholas Yap, Tianyuan Zhang, and Adam Zweiger for helpful discussion.
We gratefully acknowledge the support of the Schmidt Sciences AI2050 fellowship, the Google ML and Systems Junior Faculty Awards, the Google Research Scholar program, and the National Science Foundation (Award \#2441872).

\bibliographystyle{abbrvnat}
\bibliography{citations}

\appendix
\section{Backward Pass}

\subsection{Tile-wise Epilogue}
\label{sec:tile-wise-epilogue}

Partition the GEMM output $\vh$ into tiles $\vh_{[i,j]}$. A tile-wise epilogue applies an independent transformation to each tile:
\begin{align*}
\vh &= \vx \mW_0, \qquad
\vh^\prime =
\begin{bmatrix}
\vf[0,0]\left(\vh[0,0]\right) & \cdots & \vf[0,N]\left(\vh[0,N]\right) \\
\vdots & \ddots & \vdots \\
\vf[M,0]\left(\vh[M,0]\right) & \cdots & \vf[M,N]\left(\vh[M,N]\right)
\end{bmatrix},
\qquad
\vy = \vh^\prime \mW_1 .
\end{align*}

The backward pass has the same block structure.
\begin{equation*}
\footnotesize    
\begin{aligned}
\nabla_{\vh^\prime}\mathcal{L}
&= \nabla_{\vy}\mathcal{L}\mW_1^\top, \qquad
\nabla_{\vh}\mathcal{L} =
\begin{bmatrix}
\vg[0,0]\left(\nabla_{\vh^\prime}\mathcal{L}[0,0]\right) & \cdots & \vg[0,N]\left(\nabla_{\vh^\prime}\mathcal{L}[0,N]\right) \\
\vdots & \ddots & \vdots \\
\vg[M,0]\left(\nabla_{\vh^\prime}\mathcal{L}[M,0]\right) & \cdots & \vg[M,N]\left(\nabla_{\vh^\prime}\mathcal{L}[M,N]\right)
\end{bmatrix},
\qquad
\nabla_{\vx}\mathcal{L}
= \nabla_{\vh}\mathcal{L}\mW_0^\top .
\end{aligned}
\end{equation*}
Here each $\vg[i,j]$ is the local backward rule for the corresponding tile:
\begin{align*}
\vg[i,j](\Delta)
=
\operatorname{unvec}\!\left(
\mJ_{[i,j]}^\top \operatorname{vec}(\Delta)
\right),
\qquad
\mJ_{[i,j]}
=
\frac{\partial \operatorname{vec}\left(f[i,j]\left(\vh[i,j]\right)\right)}
     {\partial \operatorname{vec}\left(\vh[i,j]\right)}.
\end{align*}

Thus, each gradient tile depends only on the corresponding forward tile and upstream gradient tile. No cross-tile communication is introduced, so the backward operation remains tile-local and can be implemented as a GEMM epilogue. As shown in \Cref{fig:gemm-epilogue-fwd-bwd}, the only structural change is the direction of fusion: forward epilogues are fused into the GEMM that produces their input, while backward epilogues are fused into the GEMM that produces the gradient with respect to their output.

\subsection{GEMM-RMSNorm-GEMM Backward Pass}
\label{sec:backward-rmsnorm-epilogue}

We now describe the backward pass for the GEMM--epilogue--RMSNorm--GEMM pattern. RMSNorm is the first case where the backward pass is not purely tile-local. The reason is simple: RMSNorm contains a row-wise normalization factor, so its backward pass needs a row-wise statistic. In addition, the RMSNorm weight $\boldsymbol{\gamma}$ is shared across rows, so its gradient requires a reduction across the row dimension. The goal of this section is to show that these are the only non-local pieces. Everything else can still be fused into GEMM epilogues, with the non-local pieces handled by lightweight reductions over tile partials.

Consider the forward computation
\begin{align*}
\vh_0 &= \vx \mW_0, \qquad
\vh_1 = f(\vh_0), \qquad
\vh_2 = \operatorname{RMSNorm}(\vh_1, \boldsymbol{\gamma}), \qquad
\vy = \vh_2 \mW_1 .
\end{align*}
Let $\vr$ be the row-wise inverse RMS factor. We write
$\overline{\vr} = \vr \mathbf{1}^{\top}$ and
$\overline{\boldsymbol{\gamma}} = \mathbf{1}\boldsymbol{\gamma}^{\top}$
for the broadcasts of $\vr$ and $\boldsymbol{\gamma}$ to the shape of $\vh_1$. Then
\begin{align*}
\vh_2
=
\overline{\vr} \odot \vh_1 \odot \overline{\boldsymbol{\gamma}} .
\end{align*}

Given the upstream gradient $\nabla_{\vy}\mathcal{L}$, the first backward operation is the GEMM
\begin{align*}
\nabla_{\vh_2}\mathcal{L}
=
\nabla_{\vy}\mathcal{L}\mW_1^\top .
\end{align*}
The RMSNorm backward can be written as
\begin{align*}
\nabla_{\vh_1}\mathcal{L}
&=
\overline{\vr} \odot
\left(
\nabla_{\vh_2}\mathcal{L} \odot \overline{\boldsymbol{\gamma}}
-
\overline{\vr} \odot \vh_1 \odot \overline{\vs}
\right), \\
\nabla_{\boldsymbol{\gamma}}\mathcal{L}
&=
\operatorname{sum}_{\mathrm{rows}}
\left(
\nabla_{\vh_2}\mathcal{L}
\odot \vh_1
\odot \overline{\vr}
\right),
\end{align*}
where $\overline{\vs} = \vs \mathbf{1}^{\top}$ broadcasts one scalar per row. The row-wise statistic $\vs$ is
\begin{align*}
\vs
&=
\frac{1}{d}
\odot \vr \odot
\operatorname{sum}_{\mathrm{cols}}
\left(
\nabla_{\vh_2}\mathcal{L} \odot \overline{\boldsymbol{\gamma}} \odot \vh_1
\right), \\
&=
\frac{1}{d}
\operatorname{sum}_{\mathrm{cols}}
\left(
\nabla_{\vh_2}\mathcal{L} \odot \overline{\vr} \odot \overline{\boldsymbol{\gamma}} \odot \vh_1
\right), \\
&=
\frac{1}{d}
\operatorname{sum}_{\mathrm{cols}}
\left(
\nabla_{\vh_2}\mathcal{L} \odot \vh_2
\right),
\end{align*}
where $d$ is the hidden dimension. This expression identifies the two non-local operations in RMSNorm backward. The statistic $\vs$ is a reduction across columns, producing one scalar per row. The weight gradient $\nabla_{\boldsymbol{\gamma}}\mathcal{L}$ is a reduction across rows, producing one scalar per hidden feature.

A standalone RMSNorm backward kernel would compute these reductions by reading activation-sized tensors. The key observation is that the row-wise statistic $\vs$ can be moved to a different boundary. Using $\nabla_{\vh_2}\mathcal{L} = \nabla_{\vy}\mathcal{L} \, \mW_1^\top$ and
$\vy = \vh_2 \mW_1$, we have
\begin{align*}
\vs
&=
\frac{1}{d}
\operatorname{sum}_{\mathrm{cols}}
\left(
\nabla_{\vh_2}\mathcal{L} \odot \vh_2
\right), \\
&=
\frac{1}{d}
\operatorname{sum}_{\mathrm{cols}}
\left(
(\nabla_{\vy}\mathcal{L}\mW_1^\top) \odot \vh_2
\right) \\
&=
\frac{1}{d}
\operatorname{diag}
\left(
(\nabla_{\vy}\mathcal{L}\mW_1^\top) \; \vh_2^\top
\right) \\
&=
\frac{1}{d}
\operatorname{diag}
\left(
\nabla_{\vy}\mathcal{L} \; (\vh_2 \mW_1)^\top
\right) \\
&=
\frac{1}{d}
\operatorname{sum}_{\mathrm{cols}}
\left(
\nabla_{\vy}\mathcal{L} \odot (\vh_2 \mW_1)
\right) \\
&=
\frac{1}{d}
\operatorname{sum}_{\mathrm{output}}
\left(
\nabla_{\vy}\mathcal{L} \odot \vy
\right).
\end{align*}
Intuitively, the RMSNorm backward needs the inner product between an activation and its gradient along each row. The identity above says that this inner product can be computed either before or after the following GEMM. This lets us compute the statistic at a boundary where $\vy$ and $\nabla_{\vy}\mathcal{L}$ are already available.

This is useful because Transformer layers contain consecutive GEMM--epilogue--RMSNorm--GEMM patterns. During the backward pass of one pattern, the GEMM that produces $\nabla_{\vx}\mathcal{L}$ already has access to both $\vx$ and $\nabla_{\vx}\mathcal{L}$. Since this $\vx$ is the output of the preceding pattern, the epilogue of the current pattern can accumulate the RMSNorm statistic needed by the preceding pattern:
\begin{align*}
\widehat{\vs}_{\mathrm{prev}}
=
\operatorname{reduceTile}_{\mathrm{cols}}
\left(
\vx \odot \nabla_{\vx}\mathcal{L}
\right),
\qquad
\vs_{\mathrm{prev}}
=
\frac{1}{d}
\operatorname{reduce}
\left(
\widehat{\vs}_{\mathrm{prev}}
\right).
\end{align*}
Thus, each pattern computes the row-wise RMSNorm backward statistic required by the pattern before it. The reduction is still present, but it is now a small reduction over tile partials rather than a standalone activation-sized RMSNorm backward kernel.

The RMSNorm weight gradient is handled similarly, except that its reduction is across rows rather than columns. We accumulate tile partials in the RMSNorm backward epilogue:
\begin{align*}
\widehat{\nabla_{\boldsymbol{\gamma}}\mathcal{L}}
=
\operatorname{reduceTile}_{\mathrm{rows}}
\left(
\nabla_{\vh_2}\mathcal{L}
\odot \vh_1
\odot \overline{\vr}
\right),
\qquad
\nabla_{\boldsymbol{\gamma}}\mathcal{L}
=
\operatorname{reduce}_{\mathrm{rows}}
\left(
\widehat{\nabla_{\boldsymbol{\gamma}}\mathcal{L}}
\right).
\end{align*}

Putting these pieces together, the backward pass is organized as follows:
\begin{align*}
\text{GEMM 1:}\quad
\nabla_{\vh_2}\mathcal{L}
&=
\nabla_{\vy}\mathcal{L}\mW_1^\top, \\
\text{Epilogue 1:}\quad
\nabla_{\vh_1}\mathcal{L}
&=
\overline{\vr} \odot
\left(
\nabla_{\vh_2}\mathcal{L} \odot \overline{\boldsymbol{\gamma}}
-
\vh_1 \odot \overline{\vr} \odot \overline{\vs}
\right), \\
\nabla_{\vh_0}\mathcal{L}
&=
g(\nabla_{\vh_1}\mathcal{L}), \\
\widehat{\nabla_{\boldsymbol{\gamma}}\mathcal{L}}
&=
\operatorname{reduceTile}_{\mathrm{rows}}
\left(
\nabla_{\vh_2}\mathcal{L}
\odot \vh_1
\odot \overline{\vr}
\right), \\ \\
\text{GEMM 2:}\quad
\nabla_{\vx}\mathcal{L}
&=
\nabla_{\vh_0}\mathcal{L}\mW_0^\top, \\
\text{Epilogue 2:}\quad
\widehat{\vs}_{\mathrm{prev}}
&=
\operatorname{reduceTile}_{\mathrm{cols}}
\left(
\vx \odot \nabla_{\vx}\mathcal{L}
\right), \\ \\
\text{Auxiliary reductions:}\quad
\vs_{\mathrm{prev}}
&=
\frac{1}{d}
\operatorname{reduce}_{\mathrm{cols}}
\left(
\widehat{\vs}_{\mathrm{prev}}
\right), \\
\nabla_{\boldsymbol{\gamma}}\mathcal{L}
&=
\operatorname{reduce}_{\mathrm{rows}}
\left(
\widehat{\nabla_{\boldsymbol{\gamma}}\mathcal{L}}
\right).
\end{align*}
Here $g$ denotes the tile-local backward rule for the epilogue $f$. The statistic $\vs$ used in \text{Epilogue 1} is assumed to have already been accumulated by the following pattern in the backward order.

Finally, the output of a pattern often passes through another epilogue before the next pattern begins:
\begin{align*}
\vh_0 &= \vx \mW_0, \qquad
\vh_1 = f_0(\vh_0), \qquad
\vh_2 = \operatorname{RMSNorm}(\vh_1, \boldsymbol{\gamma}), \qquad
\vh_3 = \vh_2 \mW_1
\qquad \vy = f_1(\vh_3).
\end{align*}
In this case, the statistic should be accumulated using the pre-epilogue tensor $\vh_3$ and its gradient. The backward rule for $f_1$ is tile-local, so it can be fused before accumulating the statistic:
\begin{align*}
\text{GEMM 2:}\quad
\nabla_{\vx}\mathcal{L}
&=
\nabla_{\vh_0}\mathcal{L}\mW_0^\top, \\
\text{Epilogue 2:}\quad
\nabla_{\overleftarrow{\vh_3}}\mathcal{L}
&=
\overleftarrow{f_1}(\nabla_{\vx}\mathcal{L}), \\
\widehat{\vs}_{\mathrm{prev}}
&=
\operatorname{reduceTile}_{\mathrm{cols}}
\left(
\overleftarrow{\vh_3} \odot \nabla_{\overleftarrow{\vh_3}}\mathcal{L}
\right)
\end{align*}
Here $\overleftarrow{\vh_3}$ denotes the pre-epilogue GEMM output from the preceding pattern, and $\overleftarrow{f_1}$ denotes the local backward rule for its following epilogue. This case preserves the same structure: apply the local backward epilogue first, then accumulate the row-wise statistic from the pre-epilogue activation and its gradient.

Overall, this organization removes the activation-sized RMSNorm backward kernel. The remaining non-local work consists only of reductions over tile partials: a column reduction for the row-wise RMSNorm statistic, and a row reduction for the RMSNorm weight gradient. The GEMMs and local backward updates are fused into GEMM epilogues, preserving the same GEMM-plus-epilogue structure used in the forward pass.

\section{CODA}
\subsection{Epilogue Template}
\label{label:evt-template}
\begin{lstlisting}[caption={Epilogue Kernel Abstraction.},label={lst:engine}, language=Python,basicstyle=\ttfamily\scriptsize]
# --- Mainloop ---
# ...
# --- Pre-loop: visitor tree entry points ---
epilogue.consumer_begin(...)
epilogue.producer_begin(...)

# --- Main loop: epilogue tiles ---
for epi_idx in range(num_epi_tiles):
    # Per-tile visitor entry
    epilogue.consumer_begin_loop(...)
    # Producer-side async load (e.g., TMA)
    epilogue.producer_tma_load(...)
    # Core epilogue computation
    rD = load_accumulator_fragment(...)
    epilogue.consumer_visit(rD, ...)
    # Registers -> shared memory
    store_regs_to_smem(...)
    # Pre-store visitor hook
    epilogue.consumer_smem_store(...)
    tma_store_from_smem_to_gmem(...)
    epilogue.consumer_tma_store(...)
    # Per-tile visitor exit
    epilogue.consumer_end_loop(gmem_coord)

# --- Post-loop: visitor tree finalization ---
epilogue.consumer_end(...)
\end{lstlisting}

\subsection{Epilogue Example}
\label{label:evt-example}
\begin{lstlisting}[caption={Kernel Example.},label={lst:rapier-example}, language=Python,basicstyle=\ttfamily\scriptsize]
def _create_mean_sq_reduction_op(element_type, inv_block_size):
    """Create a reduction op that accumulates mean of squares: acc + val^2 * inv_block_size.

    The combine_fn squares each new element and scales by 1/block_size before adding
    to the accumulator. The warp-level reduction uses standard addition since partial
    sums are already accumulated and scaled.
    """
    init_value = element_type(0.)
    inv_bs = element_type(inv_block_size)

    _sq_combine = lambda x, y: x + y * y * inv_bs
    _add_wrp = lambda tree_x, tree_y: pytree.tree_map(operator.add, tree_x, tree_y)

    return BlockReductionOp(
        combine_fn=lambda tree_x, tree_y: pytree.tree_map(_sq_combine, tree_x, tree_y),
        reduce_ssa=None,
        reduce_wrp=lambda xs: pytree.tree_map(
            lambda x: cute.arch.warp_reduction(
                x,
                op=_add_wrp,
                threads_in_group=HOPPER_WARP_REDUCTION_WIDTH,
            ),
            xs,
        ),
        init_value=init_value,
    )


class EVTRowVecMulPostAct(EpilogueVisitorTree):
    """
    Loads a per-N row vector W (cp.async to smem, then s2r), multiplies the
    accumulator by W into a separate register tile, and stores that scaled
    tile to a side output mPostAct via TMA. tRS_rD itself is left unchanged
    so the main D output (the unscaled GEMM result) is unaffected.

    This mirrors the rowvec=norm_weight side-output path of trainstation's
    `gemm_partial_rms_fwd`, kept local to this kernel rather than as a
    general-purpose rapier EVT.

    Inputs:
        - GEMM output (in registers): [M x N], unchanged by this op
        - mRowVec: [L, N]  - RMSNorm weight, broadcast along M

    Outputs:
        - mPostAct: [M x N] = D * W  (side output, written via TMA)
    """

    @struct_utils.mlir_namedtuple
    class EpilogueArguments(NamedTuple):
        mPostAct: cute.Tensor | None
        mRowVec: cute.Tensor | None

    @struct_utils.register_pytree_dataclass
    @dataclass
    class EpilogueParams(EpilogueVisitorTree.EpilogueParams):
        mPostAct: cute.Tensor | None
        mRowVec: cute.Tensor | None
        epi_tma_atom: cute.CopyAtom
        epi_gmem_layout: cutlass.utils.LayoutEnum
        epi_smem_layout_staged: cute.Layout

    @struct_utils.register_pytree_dataclass
    @dataclass
    class EpilogueTensorsSMem(EpilogueVisitorTree.EpilogueTensorsSMem):
        sPostAct: cute.Tensor | None
        sRowVec: cute.Tensor | None

    @struct_utils.register_pytree_dataclass
    @dataclass
    class EpilogueTensors(EpilogueVisitorTree.EpilogueTensors):
        tDsPostAct: cute.Tensor
        tDgPostAct: cute.Tensor
        tRS_sPostAct: cute.Tensor
        epi_tma_atom: cute.CopyAtom
        tiled_copy_postact_r2s: cute.TiledCopy
        tDsRowVec: cute.Tensor | None

    @struct_utils.register_pytree_dataclass
    @dataclass
    class EpilogueTensorsLoop(EpilogueVisitorTree.EpilogueTensorsLoop):
        tDsPostAct: cute.Tensor
        tDgPostAct: cute.Tensor
        tRS_rPostAct: cute.Tensor | None
        tRS_sPostAct: cute.Tensor
        epi_tma_atom: cute.CopyAtom
        tiled_copy_postact_r2s: cute.TiledCopy
        tDrRowVec_epi: cute.Tensor | None

    @struct_utils.register_pytree_dataclass
    @dataclass
    class EpiloguePipelines(EpilogueVisitorTree.EpiloguePipelines):
        pass

    def __init__(
        self,
        acc_dtype: type[cute.Numeric],
        post_act_dtype: type[cute.Numeric],
        tile_shape_mnk: tuple[int, int, int],
        buffer_align_bytes: int,
    ) -> None:
        super().__init__()
        self.arch = 90
        self.acc_dtype = acc_dtype
        self.post_act_dtype = post_act_dtype
        self.container_dtype = post_act_dtype
        self.tile_shape_mnk = tile_shape_mnk
        self.buffer_align_bytes = buffer_align_bytes

    @cute.jit
    def to_underlying_arguments(
        self,
        epi_tile: cute.Tile,
        epi_stage: int,
        epi_load_stage: int,
        epi_args: EpilogueArguments,
    ) -> EpilogueParams:

        if cutlass.const_expr(epi_args.mPostAct is not None):
            mPostAct = misc_utils.static_assert_is_Tensor(epi_args.mPostAct)
            misc_utils.static_assert(get_dtype(mPostAct) is self.container_dtype)
            (
                epi_gmem_layout,
                epi_smem_layout_staged,
                epi_tma_atom,
                epi_tma_tensor,
            ) = epilogue_utils.prepare_tma(
                tma_op="s2g",
                epi_tile=epi_tile,
                epi_stage=epi_stage,
                epi_tensor=mPostAct,
            )

        if cutlass.const_expr(epi_args.mRowVec is not None):
            misc_utils.static_assert(epi_args.mPostAct is not None)
            mRowVec = misc_utils.static_assert_is_Tensor(epi_args.mRowVec)
            mRowVec = layout_utils.assumed_align_stride(
                mRowVec,
                assumed_align=4,
            )
        else:
            mRowVec = None

        return self.EpilogueParams(
            mPostAct=epi_tma_tensor,
            mRowVec=mRowVec,
            epi_tma_atom=epi_tma_atom,
            epi_gmem_layout=epi_gmem_layout,
            epi_smem_layout_staged=epi_smem_layout_staged,
        )

    @cute.jit
    def prefetch_tma_descriptors(
        self,
        epi_params: EpilogueParams,
    ) -> None:
        cute.nvgpu.cpasync.prefetch_descriptor(epi_params.epi_tma_atom)

    @cute.jit
    def consumer_begin(
        self,
        tiled_copy_r2s: cute.TiledCopy,
        tile_coord_mnkl: cute.Coord,
        tidx: cute.Int32,
        tiled_mma: cute.TiledMma,
        tRS_rD_layout: cute.Layout,
        epi_tile: cute.Tile,
        epi_num_threads: int,
        epi_num_matrices: int,
        epi_barrier: cutlass.pipeline.NamedBarrier,
        epi_params: EpilogueParams,
        epi_tensors_smem: EpilogueTensorsSMem,
    ) -> EpilogueTensors:

        tile_M = self.tile_shape_mnk[0]
        tile_N = self.tile_shape_mnk[1]
        m_idx, n_idx, _, batch_idx = tile_coord_mnkl
        thr_copy_r2s = tiled_copy_r2s.get_slice(tidx)

        # Side output (PostAct) TMA setup
        mPostAct = misc_utils.static_assert_is_Tensor(epi_params.mPostAct)
        sPostAct = misc_utils.static_assert_is_Tensor(epi_tensors_smem.sPostAct)
        tiled_copy_postact_r2s, _, tRS_sPostAct = epilogue_utils.prepare_copy_r2s_sm90(
            tiled_copy_r2s=tiled_copy_r2s,
            tidx=tidx,
            dst=sPostAct,
            epi_layout=epi_params.epi_gmem_layout,
            epi_dtype=self.container_dtype,
            acc_dtype=self.acc_dtype,
        )
        gPostAct = mPostAct[None, None, batch_idx]
        gPostAct = cute.local_tile(gPostAct, (tile_M, tile_N), (m_idx, n_idx))
        gPostAct = cute.zipped_divide(gPostAct, epi_tile)

        tDsPostAct, tDgPostAct = cute.nvgpu.cpasync.tma_partition(
            atom=epi_params.epi_tma_atom,
            cta_coord=0,
            cta_layout=cute.make_layout(1),
            smem_tensor=cute.group_modes(sPostAct, 0, cute.rank(sPostAct) - 1),
            gmem_tensor=cute.group_modes(gPostAct, 0, cute.rank(gPostAct) - 1),
        )

        # RowVec cp.async load (per-N broadcast across M)
        if cutlass.const_expr(epi_params.mRowVec is not None):
            mRowVec = misc_utils.static_assert_is_Tensor(epi_params.mRowVec)
            sRowVec = misc_utils.static_assert_is_Tensor(epi_tensors_smem.sRowVec)
            mRowVec = mRowVec[batch_idx, None]
            gRowVec = cute.local_tile(mRowVec, (tile_N,), (n_idx,))
            cRowVec = cute.make_identity_tensor(tile_N)
            limit_n = min(mRowVec.shape[0] - n_idx * tile_N, tile_N)
            memory_utils.g2s_copy_1d(
                src=gRowVec,
                dst=sRowVec,
                crd=cRowVec,
                shape=(limit_n,),
                num_threads=epi_num_threads,
                thread_index=tidx,
            )
            sRowVec_view_layout = cute.make_layout(
                shape=(tile_M, tile_N),
                stride=(0, 1),
            )
            sRowVec_view = cute.make_tensor(
                iterator=sRowVec.iterator,
                layout=sRowVec_view_layout,
            )
            tDsRowVec = thr_copy_r2s.partition_S(
                cute.flat_divide(sRowVec_view, epi_tile)
            )
            cute.arch.cp_async_commit_group()
            cute.arch.cp_async_wait_group(0)
            epi_barrier.arrive_and_wait()
        else:
            tDsRowVec = None

        return self.EpilogueTensors(
            tDsPostAct=tDsPostAct,
            tDgPostAct=tDgPostAct,
            tRS_sPostAct=tRS_sPostAct,
            epi_tma_atom=epi_params.epi_tma_atom,
            tiled_copy_postact_r2s=tiled_copy_postact_r2s,
            tDsRowVec=tDsRowVec,
        )

    @cute.jit
    def consumer_end(
        self,
        tiled_copy_r2s: cute.TiledCopy,
        tile_coord_mnkl: cute.Coord,
        tidx: cute.Int32,
        shape_mnk: cute.Shape,
        epi_tile: cute.Tile,
        epi_num_threads: int,
        epi_barrier: cutlass.pipeline.NamedBarrier,
        epi_params: EpilogueParams,
        epi_tensors: EpilogueTensors,
        epi_tensors_smem: EpilogueTensorsSMem,
    ) -> None:
        pass

    @cute.jit
    def consumer_begin_loop(
        self,
        epi_coord: cute.Coord,
        epi_params: EpilogueParams,
        epi_tensors: EpilogueTensors,
        epi_pipelines: EpiloguePipelines,
    ) -> tuple[EpilogueTensorsLoop, EpiloguePipelines]:

        if cutlass.const_expr(epi_tensors.tDsRowVec is not None):
            tDsRowVec = misc_utils.static_assert_is_Tensor(epi_tensors.tDsRowVec)
            tDsRowVec_cur = cute.group_modes(tDsRowVec, 3, cute.rank(tDsRowVec))
            tDsRowVec_cur = tDsRowVec_cur[None, None, None, epi_coord]
            tDrRowVec_cvt = memory_utils.s2r_copy_1d(tDsRowVec_cur, dtype=self.acc_dtype)
        else:
            tDrRowVec_cvt = None

        return (
            self.EpilogueTensorsLoop(
                tDsPostAct=epi_tensors.tDsPostAct,
                tDgPostAct=epi_tensors.tDgPostAct,
                tRS_rPostAct=None,
                tRS_sPostAct=epi_tensors.tRS_sPostAct,
                epi_tma_atom=epi_tensors.epi_tma_atom,
                tiled_copy_postact_r2s=epi_tensors.tiled_copy_postact_r2s,
                tDrRowVec_epi=tDrRowVec_cvt,
            ),
            self.EpiloguePipelines(),
        )

    @cute.jit
    def consumer_visit(
        self,
        tRS_rD: cute.Tensor,
        shape_mnk: cute.Shape,
        epi_params: EpilogueParams,
        epi_tensors_loop: EpilogueTensorsLoop,
    ) -> EpilogueTensorsLoop:

        tRS_rPostAct = creation_utils.allocate_tensor_like(
            tensor=tRS_rD,
            memspace="rmem",
            smem_allocator=None,
            dtype=self.acc_dtype,
        )
        if cutlass.const_expr(self.arch < 100):
            if cutlass.const_expr(epi_tensors_loop.tDrRowVec_epi is not None):
                tDrRowVec_epi = misc_utils.static_assert_is_Tensor(epi_tensors_loop.tDrRowVec_epi)
                for i in cutlass.range_constexpr(cute.size(tRS_rPostAct)):
                    tRS_rPostAct[i] = tRS_rD[i] * tDrRowVec_epi[i]
            else:
                for i in cutlass.range_constexpr(cute.size(tRS_rPostAct)):
                    tRS_rPostAct[i] = tRS_rD[i]
        else:
            raise NotImplementedError

        tRS_rPostAct = dtype_utils.convert(
            tRS_rPostAct,
            dtype=self.post_act_dtype,
        )

        return self.EpilogueTensorsLoop(
            tDsPostAct=epi_tensors_loop.tDsPostAct,
            tDgPostAct=epi_tensors_loop.tDgPostAct,
            tRS_rPostAct=tRS_rPostAct,
            tRS_sPostAct=epi_tensors_loop.tRS_sPostAct,
            epi_tma_atom=epi_tensors_loop.epi_tma_atom,
            tiled_copy_postact_r2s=epi_tensors_loop.tiled_copy_postact_r2s,
            tDrRowVec_epi=epi_tensors_loop.tDrRowVec_epi,
        )

    @cute.jit
    def consumer_smem_store(
        self,
        epi_coord: cute.Coord,
        epi_buffer: cute.Int32,
        epi_params: EpilogueParams,
        epi_tensors_loop: EpilogueTensorsLoop,
    ) -> None:
        tiled_copy = epi_tensors_loop.tiled_copy_postact_r2s
        tRS_rPostAct = misc_utils.static_assert_is_Tensor(epi_tensors_loop.tRS_rPostAct)
        tRS_sPostAct = misc_utils.static_assert_is_Tensor(epi_tensors_loop.tRS_sPostAct)
        src = tiled_copy.retile(tRS_rPostAct)
        dst = tRS_sPostAct[None, None, None, epi_buffer]
        cute.copy(atom=tiled_copy, src=src, dst=dst)

    @cute.jit
    def consumer_tma_store(
        self,
        epi_coord: cute.Coord,
        epi_buffer: cute.Int32,
        epi_params: EpilogueParams,
        epi_tensors_loop: EpilogueTensorsLoop,
    ) -> None:
        atom = epi_tensors_loop.epi_tma_atom
        tDsPostAct = misc_utils.static_assert_is_Tensor(epi_tensors_loop.tDsPostAct)
        tDgPostAct = misc_utils.static_assert_is_Tensor(epi_tensors_loop.tDgPostAct)
        src = tDsPostAct[None, epi_buffer]
        dst = tDgPostAct[None, epi_coord]
        cute.copy(atom=atom, src=src, dst=dst)

    @cute.jit
    def get_smem_struct(
        self,
        epi_load_stage: int,
        epi_num_threads: int,
        epi_params: EpilogueParams,
    ) -> type[EpilogueSharedStorage]:

        if cutlass.const_expr(epi_params.mPostAct is not None):
            post_act_smem_size = cute.cosize(epi_params.epi_smem_layout_staged)
        else:
            post_act_smem_size = 0

        if cutlass.const_expr(epi_params.mRowVec is not None):
            mRowVec = misc_utils.static_assert_is_Tensor(epi_params.mRowVec)
            row_vec_dtype = get_dtype(mRowVec)
            row_vec_smem_size = epilogue_utils.get_smem_size_vector(
                mTensor=mRowVec,
                epi_tile=self.tile_shape_mnk[1],
                epi_num_threads=epi_num_threads,
            )
        else:
            row_vec_dtype = cute.Float32
            row_vec_smem_size = 0

        @cute.struct
        class SharedStorage(EpilogueSharedStorage):
            sPostAct: cute.struct.Align[cute.struct.MemRange[self.container_dtype, post_act_smem_size], self.buffer_align_bytes]
            sRowVec: cute.struct.Align[cute.struct.MemRange[row_vec_dtype, row_vec_smem_size], 16]

        return SharedStorage

    @cute.jit
    def get_smem_tensors(
        self,
        storage: EpilogueSharedStorage,
        epi_num_threads: int,
        epi_params: EpilogueParams,
    ) -> EpilogueTensorsSMem:

        if cutlass.const_expr(epi_params.mPostAct is not None):
            sPostAct = storage.sPostAct.get_tensor(
                epi_params.epi_smem_layout_staged.outer,
                swizzle=epi_params.epi_smem_layout_staged.inner,
            )
        else:
            sPostAct = None

        if cutlass.const_expr(epi_params.mRowVec is not None):
            sRowVec_layout = cute.make_layout(self.tile_shape_mnk[1])
            sRowVec = storage.sRowVec.get_tensor(sRowVec_layout)
        else:
            sRowVec = None

        return self.EpilogueTensorsSMem(
            sPostAct=sPostAct,
            sRowVec=sRowVec,
        )

    @cute.jit
    def get_smem_bytes_per_stage(
        self,
        epi_tile: cute.Tile,
        epi_num_threads: int,
        epi_args: EpilogueArguments,
    ) -> tuple[int, int, int]:
        epi_smem_bytes_fixed = 0
        epi_smem_bytes_per_stage_cst = 0
        epi_smem_bytes_per_stage_pld = 0

        if cutlass.const_expr(epi_args.mPostAct is not None):
            mPostAct = misc_utils.static_assert_is_Tensor(epi_args.mPostAct)
            misc_utils.static_assert(get_dtype(mPostAct) is self.container_dtype)
            epi_smem_bytes_per_stage_cst = epi_smem_bytes_per_stage_cst + (
                epilogue_utils.get_epi_smem_bytes_per_stage_matrix(
                    mTensor=mPostAct,
                    epi_tile=epi_tile,
                )
            )

        if cutlass.const_expr(epi_args.mRowVec is not None):
            mRowVec = misc_utils.static_assert_is_Tensor(epi_args.mRowVec)
            epi_smem_bytes_fixed = epi_smem_bytes_fixed + (
                epilogue_utils.get_epi_smem_bytes_per_stage_fixed_vector(
                    mTensor=mRowVec,
                    epi_tile=self.tile_shape_mnk[1],
                    epi_num_threads=epi_num_threads,
                )
            )

        return (
            epi_smem_bytes_fixed,
            epi_smem_bytes_per_stage_cst,
            epi_smem_bytes_per_stage_pld,
        )


def prepare_epilogue(
    shape_mnkl: tuple[int, int, int, int],
    tile_shape_mn: tuple[int, int],
    C: torch.Tensor,
    S: torch.Tensor,
    W: torch.Tensor,
    O: torch.Tensor,
) -> tuple[
    Callable[..., EpilogueVisitorTree],
    EpilogueVisitorTree.EpilogueArguments,
    dict,
    tuple,
]:
    """Prepare epilogue for GEMM with residual, partial mean-of-squares, and
    fused per-N RMSNorm-weight scaling - mirrors trainstation's `gemm_partial_rms_fwd`.

    Composes three EVT visitors:
        1. EVTResidual: D = acc + C
        2. EVTColBlockReductionStore: S[m, nb] = mean(D[m, nb*bs:(nb+1)*bs]^2)
        3. EVTRowVecMulPostAct (local): O[m, n] = D[m, n] * W[n], side output via TMA

    The partial sum-of-squares is computed on the *unscaled* D, so a downstream
    rstd reduction sees the GEMM output before W is applied. tRS_rD is preserved
    so the main D output is also unscaled.

    Args:
        shape_mnkl: Problem shape (M, N, K, L) where L is batch dimension.
        tile_shape_mn: CTA tile shape (tile_M, tile_N).
        C: Residual matrix of shape (M, N).
        S: Output for partial mean-of-squares of shape (M, num_blocks) in fp32.
        W: RMSNorm weight of shape (N,), broadcast across M.
        O: Output of shape (M, N) for D * W.

    Returns:
        Tuple of (epi_cls, epi_args, epi_outs, epi_keys).
    """
    M, N, K, L = shape_mnkl

    epi_dtype = torch2cute_dtype_map[C.dtype]
    post_act_dtype = torch2cute_dtype_map[O.dtype]

    epi_cls = lambda acc_dtype, tile_shape_mnk, buffer_align_bytes: EVTList([
        EVTResidual(
            acc_dtype=acc_dtype,
            epi_dtype=epi_dtype,
            tile_shape_mnk=tile_shape_mnk,
            buffer_align_bytes=buffer_align_bytes,
        ),
        EVTColBlockReductionStore(
            reduction_op=_create_mean_sq_reduction_op(
                element_type=acc_dtype,
                inv_block_size=1.0 / tile_shape_mnk[1],
            ),
            tile_shape_mnk=tile_shape_mnk,
        ),
        EVTRowVecMulPostAct(
            acc_dtype=acc_dtype,
            post_act_dtype=post_act_dtype,
            tile_shape_mnk=tile_shape_mnk,
            buffer_align_bytes=buffer_align_bytes,
        ),
    ])

    epi_args = EVTList.EpilogueArguments([
        EVTResidual.EpilogueArguments(
            mMatrix=C,
        ),
        EVTColBlockReductionStore.EpilogueArguments(
            mColVec=S,
        ),
        EVTRowVecMulPostAct.EpilogueArguments(
            mPostAct=O,
            mRowVec=W,
        ),
    ])

    epi_keys = (
        C.dtype,
        S.dtype,
        W.dtype,
        O.dtype,
        EVTResidual,
        EVTColBlockReductionStore,
        EVTRowVecMulPostAct,
    )

    epi_outs = {}

    return epi_cls, epi_args, epi_outs, epi_keys
\end{lstlisting}

\section{Experiments}
\subsection{List of Kernels}
\label{sec:list-of-kernels}

We summarize the kernels implemented in \texttt{CODA}. Each kernel is a GEMM followed by an epilogue program.

\subsubsection{Basic Epilogue Kernels}

We first list three basic GEMM-plus-epilogue kernels. These are useful for isolating individual epilogue primitives, although they are not always used directly in the Transformer forward pass.

\textbf{Kernel 1: GEMM with RoPE.}
This kernel applies RoPE~\cite{su2024roformer} to pairs of adjacent features in the GEMM output:
\begin{align*}
\mD &= \mA\mB, \\
\mO &= \operatorname{RoPE}(\mD).
\end{align*}

\textbf{Kernel 2: GEMM with SwiGLU.}
This kernel applies a fused SwiGLU activation to an interleaved GEMM output:
\begin{align*}
\mD &= \mA\mB, \\
[\mG,\mU] &= \operatorname{interleavedSplit}(\mD), \\
\mO &= \operatorname{silu}(\mG) \odot \mU.
\end{align*}

\textbf{Kernel 3: GEMM with partial cross-entropy.}
This kernel computes logits, selects the target logit, and emits block-wise log-sum-exp statistics for the cross-entropy loss:
\begin{align*}
\mZ &= \mA\mB, \\
\vz_{\mathrm{tgt}} &= \mZ[\vy], \\
\widehat{\vl}_{\mathrm{lse}} &= \operatorname{reduceTile}_{\log\sum\exp}(\mZ).
\end{align*}

\subsubsection{Forward-Pass Kernels}

The following kernels implement the reparameterized Transformer forward pass. They compose the basic epilogue primitives with RMSNorm scaling, residual updates, and partial reductions.

\textbf{Kernel 4: GEMM with residual, partial RMSNorm, and weight scaling.}
This kernel implements the first stage of the GEMM--Residual--RMSNorm--GEMM pattern. It forms the residual-updated activation, emits partial RMS statistics, and applies the RMSNorm weight:
\begin{align*}
\mD &= \mA\mB + \mC, \\
\widehat{\vr} &= \operatorname{reduceTile}_{\mathrm{cols}}(\mD \odot \mD), \\
\mO &= \mD \odot \boldsymbol{\gamma}.
\end{align*}

\textbf{Kernel 5: GEMM with RMSNorm scaling.}
This kernel consumes a precomputed row-wise normalization factor and applies it in the GEMM epilogue:
\begin{align*}
\mD &= \mA\mB, \\
\mO &= \mD \odot \vr.
\end{align*}

\textbf{Kernel 6: GEMM with RMSNorm and SwiGLU.}
This kernel composes row-wise RMSNorm scaling with SwiGLU, corresponding to the MLP gate/up projection:
\begin{align*}
\mD &= \mA\mB, \\
\mD^{\prime} &= \mD \odot \vr, \\
[\mG,\mU] &= \operatorname{interleavedSplit}(\mD^{\prime}), \\
\mO &= \operatorname{silu}(\mG) \odot \mU.
\end{align*}

\textbf{Kernel 7: GEMM with RMSNorm and RoPE.}
This kernel composes row-wise RMSNorm scaling with RoPE, corresponding to QKV projection followed by rotary positional embedding:
\begin{align*}
\mD &= \mA\mB, \\
\mD^{\prime} &= \mD \odot \vr, \\
\mO &= \operatorname{RoPE}(\mD^{\prime}).
\end{align*}

\textbf{Kernel 8: GEMM with RMSNorm and partial cross-entropy.}
This kernel adds row-wise RMSNorm scaling before target-logit selection and partial log-sum-exp reduction, corresponding to the language-modeling head:
\begin{align*}
\mZ &= (\mA\mB) \odot \vr, \\
\vz_{\mathrm{tgt}} &= \mZ[\vy], \\
\widehat{\vl}_{\mathrm{lse}} &= \operatorname{reduceTile}_{\log\sum\exp}(\mZ).
\end{align*}

\subsubsection{Backward-Pass Kernels}

Finally, we list the backward kernels. These kernels mirror the forward structure: each performs a GEMM, applies the local backward rule in the epilogue, and emits partial reductions needed by neighboring RMSNorm backward computations.

\textbf{Kernel 9: GEMM with residual and RMSNorm backward.}
This kernel implements the local part of RMSNorm backward. Let $\mC$ denote the RMSNorm input, $\vr$ the row-wise inverse RMS factor, $\boldsymbol{\gamma}$ the RMSNorm weight, and $\vz_{\Delta z}$ the row-wise RMSNorm backward statistic:
\begin{align*}
\mD &= \mA\mB^{\top}, \\
\mC_{\mathrm{norm}} &= \mC \odot \vr, \\
\mO_{\mathrm{out}}
&=
\mO_{\mathrm{in}}
+
\left(
\mD \odot \boldsymbol{\gamma}
-
\mC_{\mathrm{norm}} \odot \vz_{\Delta z}
\right)
\odot \vr, \\
\mC_{\mathrm{out}}
&=
\mC_{\mathrm{norm}} \odot \boldsymbol{\gamma}, \\
\widehat{\nabla_{\boldsymbol{\gamma}}\mathcal{L}}
&=
\operatorname{reduceTile}_{\mathrm{rows}}
\left(
\mD \odot \mC_{\mathrm{norm}}
\right).
\end{align*}

\textbf{Kernel 10: GEMM with SwiGLU backward.}
This kernel computes the backward pass of a fused SwiGLU epilogue and emits the row-wise statistic needed by the preceding RMSNorm backward. Let $\mZ$ be the saved interleaved pre-activation tensor:
\begin{align*}
\mD &= \mA\mB^{\top}, \\
[\mG,\mU] &= \operatorname{interleavedSplit}(\mZ), \\
\mO &= \operatorname{silu}(\mG) \odot \mU, \\
\nabla_{\mU}\mathcal{L}
&=
\mD \odot \operatorname{silu}(\mG), \\
\nabla_{\mG}\mathcal{L}
&=
\mD \odot \mU \odot
\left(
\sigma(\mG)
+
\operatorname{silu}(\mG) \odot (1-\sigma(\mG))
\right), \\
\nabla_{\mZ}\mathcal{L}
&=
\operatorname{interleavedConcat}
\left(
\nabla_{\mG}\mathcal{L},
\nabla_{\mU}\mathcal{L}
\right), \\
\widehat{\vz_{\Delta z}}
&=
\operatorname{reduceTile}_{\mathrm{cols}}
\left(
\mG \odot \nabla_{\mG}\mathcal{L}
+
\mU \odot \nabla_{\mU}\mathcal{L}
\right).
\end{align*}

\subsection{Setup Details}
\label{sec:setup-details}
Experiments are conducted using a single H100 GPU. We use the following package versions.
\begin{enumerate}
    \item PyTorch 2.10.0
    \item CuTeDSL 4.4.2
    \item Liger Kernels 0.8.0
    \item FlashInfer 0.6.10.post1
    \item QuACK Kernels 0.4.1
\end{enumerate}

\end{document}